
\documentclass[sigconf, nonacm]{acmart}

\usepackage{amsmath,amsfonts}
\usepackage{graphicx}
\usepackage{textcomp}
\usepackage{xcolor}
\usepackage{booktabs}       
\usepackage{nicefrac}       
\usepackage{microtype}      
\usepackage{subfig}
\usepackage{xspace}
\usepackage{listings}
\usepackage{amsthm}
\usepackage{algorithm}
\usepackage[noend]{algpseudocode}
\usepackage{varwidth}
\usepackage{balance}
\usepackage{multirow}
\usepackage{bm}
\usepackage{dcolumn}
\usepackage{url}
\usepackage{cleveref}
\usepackage[tableposition=top]{caption}

\newcommand\vldbdoi{XX.XX/XXX.XX}
\newcommand\vldbpages{XXX-XXX}
\newcommand\vldbvolume{14}
\newcommand\vldbissue{1}
\newcommand\vldbyear{2020}
\newcommand\vldbauthors{\authors}
\newcommand\vldbtitle{\shorttitle} 
\newcommand\vldbavailabilityurl{https://github.com/FumiyukiKato/FL-TEE}
\newcommand\vldbpagestyle{plain}

\newcommand{\method}{\textsc{Olive}\xspace}
\newcommand{\red}[1]{\textcolor{red}{#1}}
\newcommand{\blue}[1]{#1}

\begin{document}
\title{\textsc{Olive}: Oblivious Federated Learning on Trusted Execution Environment Against the Risk of Sparsification}

\author{Fumiyuki Kato}
\affiliation{%
  \institution{Kyoto University}
  \country{}
}
\email{fumiyuki@db.soc.i.kyoto-u.ac.jp}

\author{Yang Cao}
\affiliation{%
  \institution{Hokkaido University}
  \country{}
}
\email{yang@ist.hokudai.ac.jp}

\author{Masatoshi Yoshikawa}
\affiliation{%
  \institution{Kyoto University}
  \country{}
}
\email{yoshikawa@i.kyoto-u.ac.jp}

\begin{abstract}
Combining Federated Learning (FL) with a Trusted Execution Environment (TEE) is a promising approach for realizing privacy-preserving FL, which has garnered significant academic attention in recent years.
Implementing the TEE on the server side enables each round of FL to proceed without exposing the client's gradient information to untrusted servers.
This addresses usability gaps in existing secure aggregation schemes as well as utility gaps in differentially private FL.
However, to address the issue using a TEE, the vulnerabilities of server-side TEEs need to be considered---this has not been sufficiently investigated in the context of FL.
The main technical contribution of this study is the analysis of the vulnerabilities of TEE in FL and the defense.
First, we theoretically analyze the leakage of memory access patterns, revealing the risk of sparsified gradients, which are commonly used in FL to enhance communication efficiency and model accuracy.
Second, we devise an inference attack to link memory access patterns to sensitive information in the training dataset.
Finally, we propose an oblivious yet efficient aggregation algorithm to prevent memory access pattern leakage.
Our experiments on real-world data demonstrate that the proposed method functions efficiently in practical scales.
\end{abstract}

\maketitle

\pagestyle{\vldbpagestyle}
\begingroup\small\noindent\raggedright\textbf{PVLDB Reference Format:}\\
\vldbauthors. \vldbtitle. PVLDB, \vldbvolume(\vldbissue): \vldbpages, \vldbyear.\\
\href{https://doi.org/\vldbdoi}{doi:\vldbdoi}
\endgroup
\begingroup
\renewcommand\thefootnote{}\footnote{\noindent
This work is licensed under the Creative Commons BY-NC-ND 4.0 International License. Visit \url{https://creativecommons.org/licenses/by-nc-nd/4.0/} to view a copy of this license. For any use beyond those covered by this license, obtain permission by emailing \href{mailto:info@vldb.org}{info@vldb.org}. Copyright is held by the owner/author(s). Publication rights licensed to the VLDB Endowment. \\
\raggedright Proceedings of the VLDB Endowment, Vol. \vldbvolume, No. \vldbissue\ %
ISSN 2150-8097. \\
\href{https://doi.org/\vldbdoi}{doi:\vldbdoi} \\
}\addtocounter{footnote}{-1}\endgroup

\ifdefempty{\vldbavailabilityurl}{}{
\vspace{.3cm}
\begingroup\small\noindent\raggedright\textbf{PVLDB Artifact Availability:}\\
The source code, data, and/or other artifacts have been made available at \url{\vldbavailabilityurl}.
\endgroup
}

\section{Introduction}
\label{sec:introduction}
In the current Big Data era, the challenge of preserving privacy in machine learning (ML) techniques has become increasingly apparent, as symbolized by the proposal of the GDPR \cite{gdpr}.
Federated learning (FL) \cite{mcmahan2016federated} is an innovative paradigm of privacy-preserving ML, which has been tested in production \cite{ramaswamy2019federated, paulik2021federated, google-FL-deploy}.
Typically, in FL, the server does not need to collect raw data from \textit{users} (we use \textit{participants} and \textit{clients} interchangeably)---it only collects \textit{gradients} (or model \textit{parameters} delta) trained on the local data of users during each round of model training. 
The server then aggregates the collected gradients into a global model.
Thus, FL is expected to enable data analyzers avoid the expenses and privacy risks of collecting and managing training data containing sensitive information.

\begin{figure}[t]
    \centering
    \includegraphics[width=1.02\hsize]{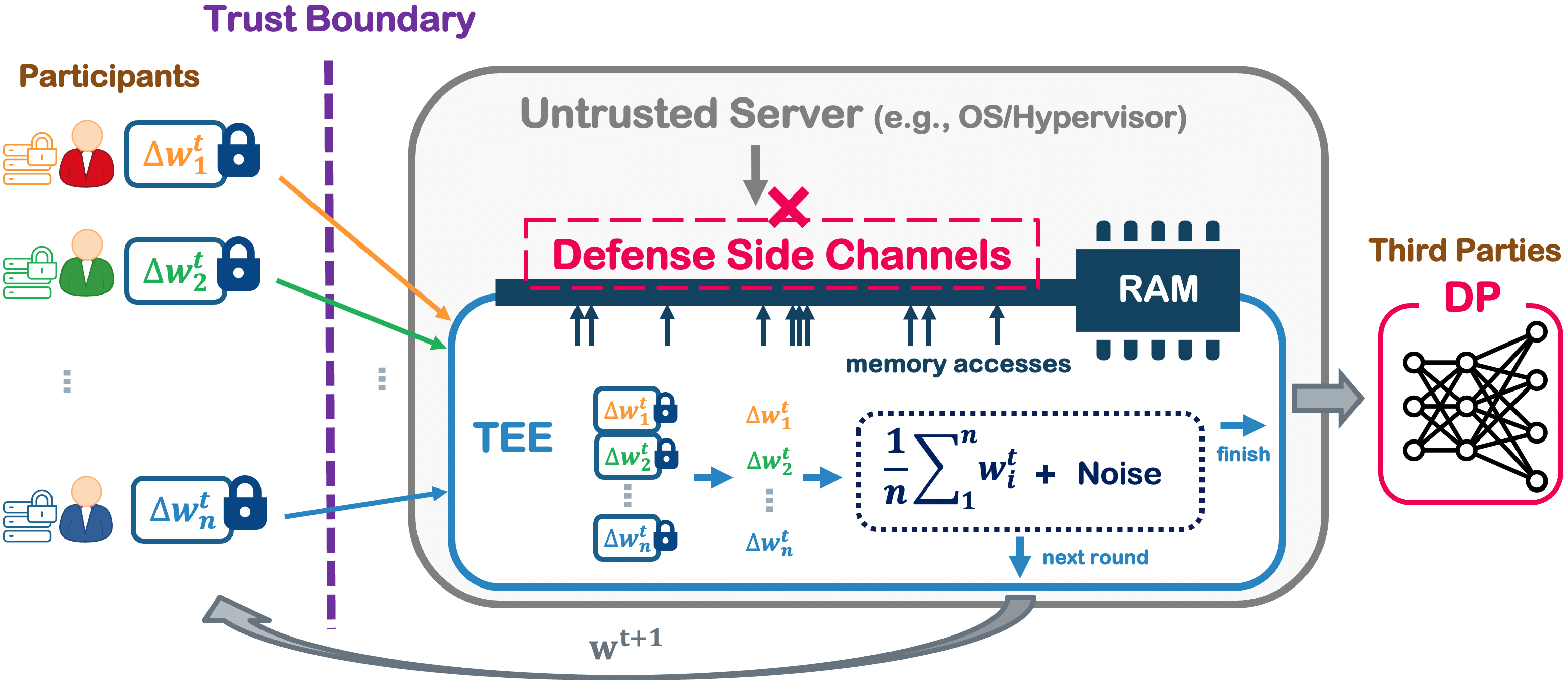}
    \caption{
    \method, i.e., {\bf O}b{\bf L}{\bf I}{\bf V}ious f{\bf E}derated learning on TEE is the first method of its kind to prevent privacy risks caused by the leakage of memory access patterns during aggregation in FL rigorously. This allows, for example, to enjoy utility of CDP-FL without requiring a trusted server like LDP-FL.}
    \label{fig:overview}
    \vspace{-3pt}
\end{figure}

However, multiple studies have highlighted the vulnerability of FL to various types of attacks owing to its decentralized scheme.
One of its most extensively studied vulnerabilities is an inference attack on a client's sensitive training data during the aggregation phase by an untrusted server \cite{zhu2020deep, zhao2020idlg, wainakh2022user, shokri2017membership, fu2022label}.
This attack arises from the requirement for each client to share raw gradient information with the central aggregation server in plain FL.
This creates the risk of privacy leakage from the training data, making it a vulnerable attack surface.
These attacks highlight the privacy/security problems of running FL on an untrusted server.

Enhancing FL using a Trusted Execution Environment (TEE) is a promising approach to achieve privacy-preserving FL, which has garnered significant attention in recent years \cite{zhao2021sear, zhang2021citadel, mo2021ppfl, zhang2021shufflefl, nguyen2022federated}.
TEE \cite{sabt2015trusted, costan2016intel} is a secure hardware technique that enables secure computation in an untrusted environment without exposing data or processing to the host (i.e., OS or hypervisor).
TEE guarantees confidentiality, integrity, verifiability, and functionalities such as remote attestation, fully justifying its use on the untrusted server side in FL \cite{zhang2021citadel, hunt2018chiron, zhang2021shufflefl}.
Gradients are transmitted to the TEE via a secure channel and computed securely in confidential memory, thereby eliminating the aforementioned attack surface.

Utilization of TEE is advantageous from several perspectives.
Although similar functionality is provided by secure aggregation (SA)\footnote{The recent paper \cite{mohamad2023sok} categorized TEE as a method of secure aggregation in FL.} based on pairwise masking, it sacrifices usability \cite{10.1145/3133956.3133982, ergun2021sparsified, DBLP:conf/icml/KairouzL021, lu2023top}. 
This requires time-consuming synchronous distributed mask generation among multiple clients and lacks robustness with respect to participant asynchronicity/dropouts \cite{mohamad2023sok}, which is difficult to handle and can impede implementation by general practitioners.
Further, SA is inflexible and makes it hard to do extensions, such as Byzantine resistance \cite{zhao2021sear} and asynchrony \cite{nguyen2022federated}.
In addition, application of gradient sparsification to FL with SA requires either random sparsification \cite{ergun2021sparsified} or a common sparsified index among multiple clients \cite{lu2023top} because of the pairwise constraints, impairing training quality.
One simple and important solution to these problems is the use of a TEE, even though it requires additional special hardware.

In addition, FL with TEE addresses the utility gap of differentially private FL (DP-FL) \cite{geyer2017differentially, DBLP:conf/iclr/McMahanRT018, erlingsson2020encode}.
The recently studied Shuffle DP-FL \cite{liu2020flame, girgis2021shuffled, erlingsson2020encode}, which aims to combine the best LDP-FL trust model  \cite{zhao2020local, ldp2020} with the model utility of the CDP-FL \cite{geyer2017differentially, DBLP:conf/iclr/McMahanRT018, andrew2021nips}, exhibits a gap with respect to CDP-FL in terms of utility \cite{erlingsson2020encode}.
As depicted in Figure \ref{fig:overview}, TEE facilitates secure model aggregation on an untrusted server, which ensures only differentially private models are observable by the server.
Without trust in the server, as in LDP-FL, model utility is equivalent to that of conventional CDP-FL because any DP mechanism can be implemented within the TEE, whereas the mechanism is restricted when using SA \cite{DBLP:conf/icml/KairouzL021}.
This important use case, i.e., the combination of the proposed method with CDP-FL, is analyzed in detail in Appendix D.

However, implementing a server-side TEE to achieve the aforementioned benefits requires careful analysis of the vulnerabilities of TEE.
Several serious vulnerabilities are known to affect TEE owing to side-channel attacks \cite{xu2015controlled, van2017telling, nilsson2020survey}, which can cause privacy leakage despite encryption.
In particular, such attacks can expose data-dependent memory access patterns of confidential execution and enable attackers to steal sensitive information, such as RSA private keys and genome information \cite{brasser2017software}.
The specific information that may be stolen from these memory access patterns is domain-specific and is not yet known for FL, although several studies have attempted to use TEE for FL \cite{mo2021ppfl, zhang2021shufflefl, 10.1145/3477114.3488765, zhang2021citadel, flatee2021}.
Thus, the extent of the threat of side-channel attacks against FL on a TEE and the types of possible attacks remain critical open problems in this context.

\textit{Oblivious algorithms} \cite{goldreich1987towards, pathoram2013ccs, ohrimenko2016oblivious} are important leakage prevention techniques that generate only data-independent memory access patterns. 
A general approach involves making the RAM oblivious, e.g., oblivious RAM (ORAM). 
PathORAM \cite{pathoram2013ccs} is known to be the most efficient technique.
However, it assumes a private memory space of a certain size and is not applicable to practical TEE, such as Intel SGX \cite{costan2016intel}.
Although Zerotrace \cite{DBLP:conf/ndss/SasyGF18} addresses this issue, its still incurs significant overhead.
Therefore, the design of an algorithm-specific method to obtain an efficient algorithm is an important problem.
In this context, \cite{ohrimenko2016oblivious} proposed an efficient oblivious algorithm for specific ML algorithms, and \cite{zheng2017opaque} studied SQL processing.
However, an efficient method for FL-specific aggregation algorithm, which can be a vulnerable component of FL with a server-side TEE, has not yet been proposed.

In this study, we address the aforementioned gaps; (1) we clarify privacy risks by designing specific attacks on FL with a server-side TEE and demonstrate them in a real-world scenario; (2) we devise a novel defense against the risks by designing efficient oblivious algorithms and evaluate them empirically on a practical scale.
Our analysis reveals that parameter position information is leaked during the execution of the FL aggregation algorithm in a \textit{sparsified} environment.
\textit{Sparsification} is often used in FL \cite{lin2018dgc, sahu2021rethinking, ergun2021sparsified, lu2023top} to reduce communication costs and/or improve model accuracy \cite{aledhari2020federated}.
The goal of an attacker is to infer a set of sensitive labels included in the target user's training data, similar to the goal described in \cite{wainakh2022user, fu2022label}.
We assume the attacker to be capable of observing memory access patterns, accessing the dataset that covers the overall dataset distribution, and accessing the model trained during each round.
Although sparsified index information in FL has been considered as somewhat private information in previous studies \cite{lu2023top, liu2020flame}, unlike in our study, no specific attacks have been investigated.
After demonstrating the proposed attack on real-world datasets, we propose efficient oblivious algorithms to prevent such attacks completely.
To this end, we carefully construct existing oblivious building blocks, such as the oblivious sort \cite{batcher1968sorting} and our designed components.
Our proposed method \method, an {\bf O}b{\bf L}{\bf I}{\bf V}ious f{\bf E}derated learning system based on server-side TEE, is resistant to side-channel attacks, enabling truly privacy-preserving FL.
In addition to fully oblivious algorithms, we further investigate optimization by adjusting the data size in the enclave, and study more efficient algorithms by relaxing the definition of obliviousness.
Finally, we conduct extensive experiments on real-world data to demonstrate that the proposed algorithm, designed for FL aggregation, is more efficient than the general-purpose PathORAM with SGX \cite{DBLP:conf/ndss/SasyGF18}.

The contributions of this study are summarized below:
\begin{itemize}
    \item We analyze the exposure of memory access patterns to untrusted servers when TEE is used for model aggregation in FL. A risk is identified in the context of sparsified gradients, which are often used in recent FL.
    \item We design a supervised learning-based sensitive label inference attack based on index information observed from side-channels of sparsified gradients. We demonstrate the attack on a real-world dataset. One of the results reveals that when training with a CNN on CIFAR100 with top-1.25\% sparsification, the sensitive labels of training data (each participant is assigned 2 out of 100 labels) are leaked with approximately 90\% or better accuracy (Figure \ref{fig:attack_sparse_ratio}). 
    \item We propose a novel oblivious algorithm that executes model aggregation efficiently by combining oblivious primitives, such as oblivious sort and certain designed components. The efficiency of the proposed method is verified via extensive experiments. In particular, it is demonstrated to be more than 10 $\times$ faster than a PathORAM-based method and require only a few seconds even in cases involving a million parameters (Figure \ref{fig:performance_on_artificial_data}).
\end{itemize}

The remainder of this paper is organized as follows. 
Preliminary notions are presented in Section \ref{sec:background}.
The overview of the proposed system and the problem setting is described in Section \ref{sec:problem_definition}.
Sections \ref{sec:attack_on_index} and \ref{sec:defense} demonstrate the proposed attack and defense, respectively, with empirical evaluations.
Section \ref{sec:related_works} discusses related works and Section \ref{sec:conclusion} concludes.
The details of the combination of DP and the proposed \method are provided in Appendix D.

\section{Preliminaries}
\label{sec:background}
\subsection{Federated Learning}
\label{sec:background_federated_learning}
Federated learning (FL) \cite{mcmahan2016federated, konevcny2016federated} is a recent ML scheme with distributed optimization.
The basic FL algorithm, called \texttt{FedAVG} \cite{mcmahan2016federated}, trains models by repeating model optimization steps in the local environment of the participants and updating the global model by aggregating the parameters of the locally trained models.
\texttt{FedSGD} \cite{mcmahan2016federated} exchanges locally updated gradients based on distributed stochastic gradient descent.
Overall, users are not required to share their training data with the server, which represents a major advantage over traditional centralized ML.

\textbf{Sparsification.}
To reduce communication costs and improve model accuracy, the sparsification of the model parameters before their transmission to the server has been extensively studied in FL \cite{lin2018dgc, sahu2021rethinking, ergun2021sparsified, lu2023top, hu2022federated, wu2022smartidx, Shahid2021CommunicationEI}.
All of the aforementioned methods sparsify parameters on the client side, apply an encoding that represents them as \textit{value} and \textit{index} information \cite{wu2022smartidx}, transmit them to the server, and aggregate them into a dense global model on the server side.
Exceptionally, \cite{hu2022federated, lu2023top} used common sparsification among all clients using common sparsified indices and aggregated them into a sparse global model.
However, as observed in \cite{ergun2021sparsified}, there is practically little overlap among the top-$k$ indices for each client in real-world data, especially in the non-i.i.d. environment, which is common in FL.
This highlights the one of limitations of pairwise masking-based SA \cite{ergun2021sparsified, lu2023top} (see Section \ref{sec:related_works}).
In general, top-$k$ sparsification is the standard method.
By transmitting only the top-$k$ parameters with large absolute gradients to the aggregation server, communication cost is reduced by more than 1 \textasciitilde 3 orders of magnitude \cite{sahu2021rethinking}.
This technique outperforms the random selection of $k$ indices (random-$k$) \cite{ergun2021sparsified}, particularly when the compression ratio is smaller than 1\% \cite{sahu2021rethinking, wu2022smartidx, hu2022federated, lu2023top}.
Other sparsification methods, such as threshold-based \cite{sahu2021rethinking}, top-$k$ under LDP \cite{liu2020fedsel} and the recently proposed convolutional kernel \cite{wu2022smartidx}, also exist.
However, these sparsified gradients can lead to privacy leakages through the index.
In \cite{lu2023top, liu2020flame}, the set of user-specific top-$k$ indices was treated as private information; however, no specific attacks were investigated.

\subsection{Trusted Execution Environment}
\label{sec:tee}
The TEE, as defined formally in \cite{sabt2015trusted}, creates an isolated execution environment within untrusted computers (e.g., cloud VMs).
We focus on a well-known TEE implementation---Intel SGX \cite{costan2016intel}.
It is an extended instruction set for Intel x86 processors, which enables the creation of an isolated memory region called an \textit{enclave}.
The enclave resides in an encrypted and protected memory region called an \textit{EPC}.
The data and programs in the EPC are transparently encrypted outside the CPU package by the Memory Encryption Engine, enabling performance comparable to native performance.
SGX assumes the CPU package to be the trust boundary---everything beyond it is considered untrusted---and prohibits access to the enclave by any untrusted software, including the OS/hypervisor.
Note that for design reasons, the user-available size of the EPC is limited to approximately 96 MB for most current machines.
When memory is allocated beyond this limit, SGX with Linux provides a special paging mechanism.
This incurs significant overhead for encryption and integrity checks, resulting in poor performance \cite{maliszewski2021price, taassori2018vault, kato2021pct}.

\textbf{Attestation.} SGX supports remote attestation (RA), which can verify the correct initial state and genuineness of an enclave.
On requesting the RA, a report with measurements based on the hash of the initial enclave state generated by the trusted processor is received.
This facilitates the identification of the program and completes the memory layout.
Intel EPID signs this measurement and the Intel Attestation Service verifies the correctness of the signature as a trusted third party.
Consequently, verifiable and secure computations are performed in a remote enclave.
Simultaneously, a secure key exchange is performed between the enclave and the remote client within this RA protocol.
Therefore, after performing RA, communication with a remote enclave can be initiated over a secure channel using AES-GCM and so on.

\subsection{Memory Access Pattern Leakage}
\label{sec:memory_access_pattern_leak}
Although the data are encrypted and cannot be viewed in enclaves, memory/page access patterns or instruction traces can be exposed irrespective of the use of a TEE \cite{xu2015controlled, van2017telling, brasser2017software, lee2017inferring, nilsson2020survey}.
This may lead to sensitive information being stolen from enclaves \cite{brasser2017software}.
For example, cacheline-level access pattern leakage occurs when a malicious OS injects page faults \cite{xu2015controlled} or uses page-table-based threats \cite{van2017telling, nilsson2020survey}.
Moreover, if a physical machine is accessible, probes may be attached to the memory bus directly.

To prevent such attacks, oblivious algorithms have been proposed to hide access patterns during the secure execution of the process.
An oblivious algorithm is defined as follows.
\begin{definition}[Oblivious algorithm \cite{chan2019foundations}]
An algorithm $\mathcal{M}$ is $\delta$-statistically oblivious if, for any two input data $I$ and $I'$ of equal length and any security parameter $\lambda$, the following relation holds:
\begin{equation}
\nonumber
  \mathbf{Accesses}^{\mathcal{M}}(\lambda, I) \overset{\delta(\lambda)}{\equiv} \mathbf{Accesses}^{\mathcal{M}}(\lambda, I')
\end{equation}
where $\mathbf{Accesses}^{\mathcal{M}}(\lambda, I)$ denotes a random variable representing the ordered sequence of memory accesses.
The algorithm $\mathcal{M}$ is generated upon receiving the inputs, $\lambda$ and $I$.
$\overset{\delta(\lambda)}{\equiv}$ indicates that the statistical distance between the two distributions is at most $\delta(\lambda)$.
The term $\delta$ is a function of $\lambda$ which corresponds to a cryptographic security parameter.
When $\delta$ is negligible, we say that $\mathcal{M}$ is \textit{fully oblivious}, and when $\delta$ is 1, it is \textit{not oblivious}.
\end{definition}

A typical approach for constructing an oblivious algorithm utilizes an ORAM, such as PathORAM \cite{pathoram2013ccs}.
Although ORAMs are designed for general use as key-value stores, several oblivious task-specific algorithms, such as ML \cite{ohrimenko2016oblivious} and SQL processing \cite{zheng2017opaque} (see Section \ref{sec:related_works} for details), have been proposed from a performance perspective.
They are constructed based on oblivious sort \cite{batcher1968sorting} and/or access to all memory (i.e., linear scan), and are distinct from ORAM at the algorithmic level.
Further, ORAM generally assumes that the existence of a trusted memory space such as \textit{client storage} \cite{pathoram2013ccs}, which is incompatible with the SGX assumption of leaking access patterns in enclaves.
Thus, only CPU registers should be considered to be trusted memory spaces \cite{DBLP:conf/ndss/SasyGF18}.
\cite{ohrimenko2016oblivious} implemented oblivious ML algorithms using \texttt{CMOV}, which is an x86 instruction providing a conditional copy in the CPU registers.
\texttt{CMOV} moves data from register to register based on a condition flag in the register, which is not observed by any memory access patterns.
Using the \texttt{CMOV} instruction, conditional branching can be implemented with a constant memory access pattern that does not depend on the input, thereby removing the leakage of subsequent code addresses.
For example, Zerotrace \cite{DBLP:conf/ndss/SasyGF18} implements PathORAM on SGX by obliviously implementing client storage based on \texttt{CMOV}.
We can construct and use low-level oblivious primitives, such as \textit{oblivious move} (\texttt{o\_mov}, Listing 1) and \textit{oblivious swap} (\texttt{o\_swap}, Listing 2).
\texttt{o\_mov(flag,x,y)} is a function that accepts a Boolean condition flag as its first argument and returns \texttt{x} or \texttt{y} depending on the flag.
Therefore, designing an appropriate oblivious algorithm for SGX requires a combination of high-level algorithm designs, such as the oblivious sort and low-level primitives.

\section{Proposed System}
\label{sec:problem_definition}
In this section, we first clarify our scenario and threat model, and then present a system overview of the \method.
Finally, we analyze the details of the potential privacy risk, followed by discussion of a specific privacy attack and evaluation in Section \ref{sec:attack_on_index}.

\subsection{Scenario}
\label{sec:problem_scenario}
We target a typical FL scenario with a single server and clients using identical format data (i.e., horizontal FL).
The server is responsible for training orchestration, aggregating parameters, updating the global model, selecting clients for each training round, and validating model quality.
The server-side machine is assumed to be placed in a public or private environment \cite{zhang2021citadel, hunt2018chiron} and is equipped with a TEE capable of RA (e.g., Intel SGX).

\textbf{Threat model.} We assume an adversary to be a \textit{semi-honest} server that allows FL algorithms to run as intended, while trying to infer the sensitive information of clients based on shared parameters.
This is a compatible threat model with those in existing studies on FL with SA \cite{10.1145/3133956.3133982} and even with server-side TEE \cite{zhang2021citadel, mo2021ppfl, zhang2021shufflefl}.
The semi-honest threat model is selected despite using TEE, because the assumed attack in this work does not diverge from the established FL protocol.
The goal of the adversary is not to damage the availability (e.g., DoS attacks) or undermine the utility of the model (e.g., data-poisoning attacks) \cite{10.1145/3322205.3311083, zhao2021sear, pmlr-v108-bagdasaryan20a} as \textit{malicious} attackers in FL context.
Note that several side-channel attacks against TEE require malicious (i.e., privileged) system software, which we distinguish from an attacker and categorize as \textit{malicious} in FL.
Nevertheless, \cite{boenisch2021curious} reported that malicious servers improve inference attacks in FL.
In Section \ref{sec:defense_discussion}, we discuss the relationship between such malicious servers and the privacy and security of the proposed system.

We assume that the server has (1) access to the trained model during each round of FL, \blue{(2) access to the global test dataset}, and (3) the capability to observe the memory access patterns of the TEE.
These requirements can be justified as follows.
(1): Because the server is in charge of model validation, it makes sense for the server to have access to the global models during all rounds.
Alternatively, attackers can easily blend in with clients to access global models.
(2): Generally, the semi-honest server that has access to public datasets for model validation covers the overall dataset distribution, which is essential in production uses.
Similar assumptions have been made in previous studies on inference attacks \cite{wang2019beyond, hu2022federated}.
Subsequently, we experimentally evaluate the required dataset volume (Figure \ref{fig:attack_reducing_test_dataset}).
(3): This follows the general threat assumption for TEE.
The SGX excludes side-channel attacks from the scope of protection \cite{costan2016intel, nilsson2020survey}.
Except for the trusted hardware component (i.e., the CPU package), all other components of the server, e.g., the system software (i.e., OS/hypervisor), main memory, and all communication paths, are considered to be untrusted.
The server can observe memory access patterns through known or unknown side-channel attacks, as described in Section \ref{sec:memory_access_pattern_leak}.

\subsection{System overview}
\label{sec:problem_overview}
The proposed system, namely the \method (Figure \ref{fig:overview}), follows basic \texttt{FedAVG} algorihtm with standard top-$k$ sparsification; however, the TEE is placed on the server side with a server-side algorithm resistant to side-channel attacks.
As an initial configuration, we provide an enclave in which each client verifies the integrity of the processes running on the enclave via RA and exchanges shared keys (AES-GCM).
If attestation fails, the client must refuse to join the FL in this phase.
We assume that communication between the client and server is performed over a secure channel (TLS), which the untrusted server terminates, and that the transmitted gradients\footnote{In \texttt{FedAVG}, the data shared by users are not exactly gradients---rather, they are the delta of model weights.
However, in the context of compatibility with \texttt{FedSGD}, we jointly refer to model update data transmitted by users as \textit{gradients} or \textit{parameters}.} are doubly encrypted and can only be decrypted in the trusted enclave.

The overall algorithm of the \method is presented in Algorithm \ref{alg:olive}, where the differences with respect to the basic \texttt{FedAVG} algorithm are highlighted in red.
The initial provisioning is omitted and a different shared key, $sk_i$, is stored in the enclave for each user, $i$ $(\in [N])$ (line 1).
In each round, the participants are securely sampled in the enclave (line 4).
The selected users are memorized in the enclave and used for client verification (line 9) after the encrypted data are loaded into the enclave (line 8).
On the client side, locally trained parameters are top-$k$ sparsified (line 21), and then encoded and encrypted (line 22).
The encrypted data loaded into the enclave are decrypted and verified (line 11).
Verification (lines 9, 11) is not essential to our work; however, it prevents man-in-the-middle attacks and biased client selection.
As discussed in Section \ref{sec:security_analysis}, the aggregation operation (line 12) is required to be oblivious, and we present lower-level and detailed algorithms in Section \ref{sec:defense} to this end.
In accordance with the principle that the Trusted Computing Base (TCB) should be minimized, only the aggregation operation is performed in the enclave.
Finally, the aggregated parameters are loaded outward from the enclave (line 13).
Thus, the parameters transmitted by all clients remain completely invisible to the server,---only the aggregated parameters are observable.

\begin{algorithm}[t]
\small
\caption{\method: Oblivious FL on TEE}
\label{alg:olive}
\begin{algorithmic}[1]
\renewcommand{\algorithmicrequire}{\textbf{Input:}}
\renewcommand{\algorithmicensure}{\textbf{Output:}}

\Require $N$: \# participants, $\eta_c$, $\eta_s$: learning rate
\State \red{$\text{KeyStore} \leftarrow \textit{Remote Attestation}$ with all user $i$} \Comment{key-value store in enclave that stores $sk_{i}$: user $i$'s shared key from RA in provisioning}
\Procedure{Train}{$q$, $\eta_c$, $\eta_s$}
    \State Initialize model $\theta^0$
    \For{each round $t=0, 1, 2, \ldots$}
        \State \red{$\mathcal{Q}^t \leftarrow$ (sample users from $N$ for round $t$)} \Comment{securely in enclave}
        \For{each user $i \in \mathcal{Q}^t$ \textbf{in parallel}}
            \State $\text{Enc}(\Delta^{t}_i) \leftarrow$ \textsc{EncClient}$(i, \theta^t, \eta_c)$
            \State \red{$\text{LoadToEnclave}(\text{Enc}(\Delta^{t}_i))$}
            \State \red{check if user $i$ is in $\mathcal{Q}^t$}
            \State \red{$sk_{i} \leftarrow \text{KeyStore}[i]$} \Comment{retrieve user $i$'s shared key}
            \State \red{$\Delta^{t}_i \leftarrow \text{Decrypt}(Enc(\Delta^{t}_i), sk_{i})$}
        \EndFor
        \State \red{/* \textbf{Obliviously performed, such as Algorithm \ref{alg:baseline} or \ref{alg:advanced}} */}
        \Statex \;\;\;\;\;\;\;\;\;\, \red{$\mathbf{\tilde{\Delta}^t = \frac{1}{|\mathcal{Q}^t|}\sum_{i \in \mathcal{Q}^t} \Delta^{t}_i}$} \Comment{\textbf{oblivious algorithm}}\label{alg1:dp}
        \State \red{$\text{LoadFromEnclave}(\tilde{\Delta}^t)$}
        \State $\theta^{t+1} \leftarrow \theta^t + \eta_s \bar{\Delta}^t$
    \EndFor
\EndProcedure

\Procedure{EncClient}{$i$, $\theta^t$, $\eta$, $C$}
    \State $\theta \leftarrow \theta^t$
    \State $\mathcal{G} \leftarrow $ (user $i$'s local data split into batches)
    \For{batch $g \in \mathcal{G}$}
        \State $\theta \leftarrow \theta - \eta \nabla \ell(\theta;g)$
    \EndFor
    \State $\Delta \leftarrow \theta - \theta^t$
    \State \red{$\Delta$ $\leftarrow$ TopkSparse$(\Delta)$}  \Comment{top-$k$ sparsification on gradients} \label{alg1:sparsification}
    \State \red{$\text{Enc}(\Delta') \leftarrow \text{Encrypt}(\Delta, sk_{i})$} \Comment{Authenticated Encryption (AE) mode, such as AES-GCM, with shared key, $sk_i$, from RA}
    \State \textbf{return} $\text{Enc}(\Delta)$
\EndProcedure

\end{algorithmic}
\end{algorithm}

\subsection{Security Analysis}
\label{sec:security_analysis}
Although TEE enables model training while protecting raw gradients, an untrusted server can observe the memory access patterns, as described in Section \ref{sec:memory_access_pattern_leak}.
Here, we analyze the threats that exist based on memory access patterns.

For formal modeling, let $g_i$ denote the $k$-dimensional gradient transmitted by user $i$ and let $g^*$ be the $d$-dimensional global parameter after aggregation.
In the typical case, $k=d$, when dense gradients are used.
Let $\mathbf{G}_i$ and $\mathbf{G^*}$ denote the memories required to store the gradients of $g_i$ and $g^*$, respectively, and let the number of clients participating in each round be $n$.
The memory that stores the entire gradient is denoted by $\mathbf{G}=\mathbf{G}_1 \mathbin\Vert ... \mathbin\Vert  \mathbf{G}_n$, where $\mathbin\Vert$ denotes concatenation.
A memory access, $a$, is represented as a triple $a=(\mathbf{A}[i], \texttt{op}, \texttt{val})$, where $\mathbf{A}[i]$ denotes the $i$-th address of the memory, $\mathbf{A}$; $\texttt{op}$ denotes the operation for the memory---either \texttt{read} or \texttt{write}; and \texttt{val} denotes the value to be written when $\texttt{op}$ is \texttt{write}, and \texttt{null} otherwise.
Therefore, the observed memory access pattern, $\mathbf{Accesses}$, can be represented as $\mathbf{Accesses}=[a_1, a_2,..., a_m]$ when the length of the memory access sequence is $m$.

\begin{figure}[t]
    \centering
    \includegraphics[clip, width=3.0in]{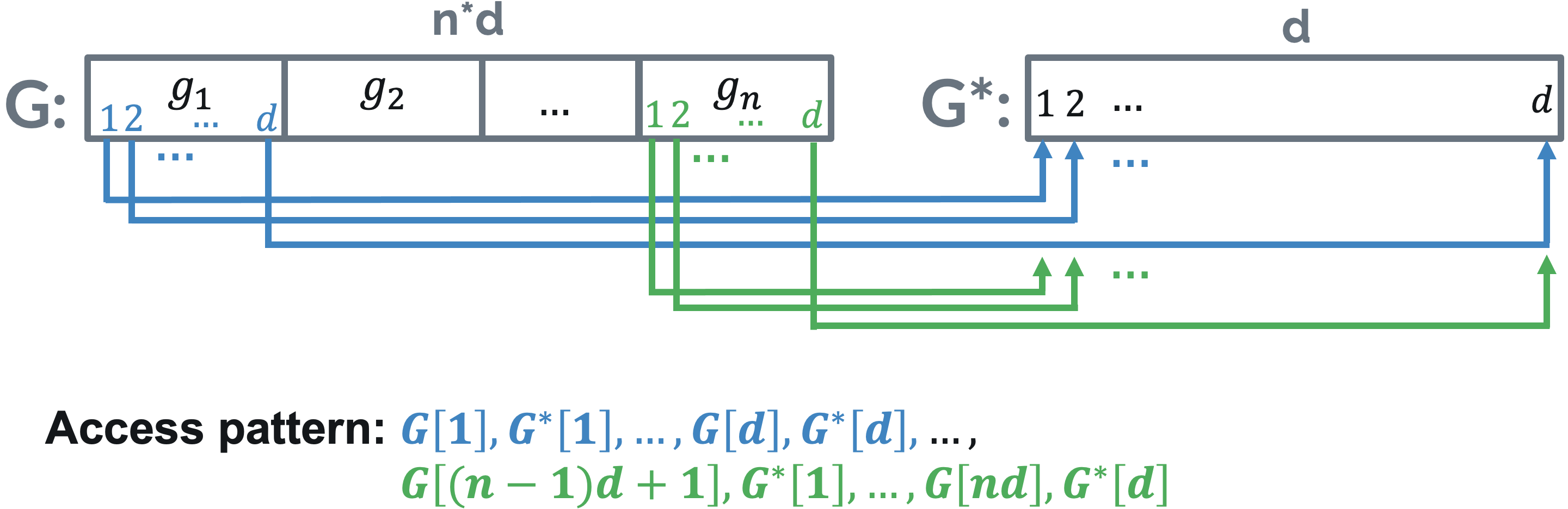}
    \vspace{-3px}
    \caption{Dense gradients induce uniform access patterns.}
    \label{fig:dense_g}
\end{figure}

\begin{figure}[t]
    \centering
    \includegraphics[clip, width=3.0in]{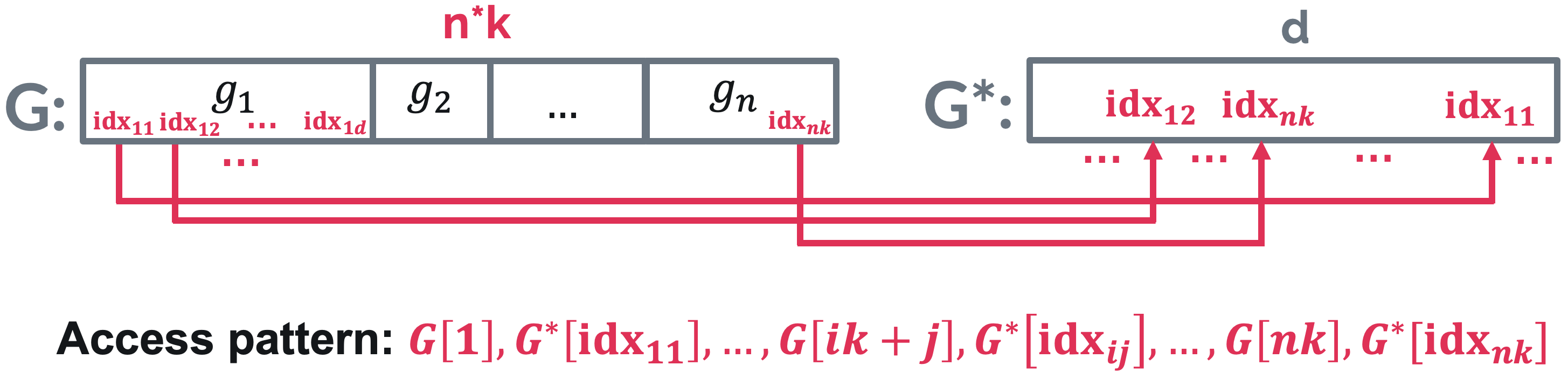}
    \vspace{-3px}
    \caption{Sparse gradients induce biased access patterns.}
    \label{fig:sparse_g}
\end{figure}

In FL, operations performed on the server side generally consist of summing and averaging the gradients obtained from all users.
We first note that this procedure is oblivious to \textit{dense gradients}.
As depicted in Figure \ref{fig:dense_g}, the summing operation involves updating the value of the corresponding index of $\mathbf{G^*}$ while performing a linear scan on $\mathbf{G}$, where memory accesses are performed in a fixed order and at fixed addresses, irrespective of the content of $\mathbf{G}$.
We refer to this general summing part as the \textit{linear} algorithm and present it in Appendix B for completeness.

\begin{proposition}
\label{prop:linear}
The \textit{linear} algorithm is fully oblivious to dense gradients. {\normalfont (An formal proof is presented in Appendix).}
\end{proposition}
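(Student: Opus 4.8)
The plan is to show that, for dense gradients, the \textit{Linear} algorithm emits a sequence of memory accesses whose addresses and operations form a deterministic function of the \emph{input length} alone, so the statistical distance $\delta(\lambda)$ in the definition of obliviousness is zero (hence certainly negligible) and the algorithm is fully oblivious. Since the claim is essentially structural, the proof reduces to writing the access sequence out explicitly and checking that it contains no data-dependent branch.

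Concretely, I would first describe the \textit{Linear} algorithm (Appendix~\ref{appendix:aggregation_alg}) as three phases, each a simple nested loop whose bounds depend only on the number of participating clients $n$ and the dimension $d$ (with $k=d$ in the dense case): (i) an accumulation phase that scans $\mathbf{G}=\mathbf{G}_1\Vert\cdots\Vert\mathbf{G}_n$ in order and, for each client $i$ and coordinate $j$, issues a \texttt{read} of $\mathbf{G}_i[j]$, a \texttt{read} of $\mathbf{G^*}[j]$, and a \texttt{write} to $\mathbf{G^*}[j]$; (ii) an averaging phase that scans $\mathbf{G^*}$ once, issuing a \texttt{read} and a \texttt{write} at each $\mathbf{G^*}[j]$; and (iii) a perturbation phase that again scans $\mathbf{G^*}$ once, drawing a noise sample and issuing a \texttt{read} and a \texttt{write} at each $\mathbf{G^*}[j]$. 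No loop bound, no accessed address, and no choice between \texttt{read} and \texttt{write} depends on the \emph{contents} of $\mathbf{G}$. Now fix any two inputs $I,I'$ of equal length; equal length fixes the same $n$ and the same $d$, so both executions run the identical program with identical loop bounds, and the ordered sequences of (address, operation) pairs in $\mathbf{Accesses}^{\mathcal{M}}(\lambda,I)$ and $\mathbf{Accesses}^{\mathcal{M}}(\lambda,I')$ coincide exactly. The only component of an access triple that can differ is the written value $\texttt{val}$; but values leaving the CPU package are written under the memory-encryption engine (equivalently, as authenticated ciphertexts with fresh IVs), so the written ciphertexts are computationally indistinguishable and contribute only a negligible term. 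Hence $\mathbf{Accesses}^{\mathcal{M}}(\lambda,I)\overset{\delta(\lambda)}{\equiv}\mathbf{Accesses}^{\mathcal{M}}(\lambda,I')$ with $\delta$ negligible, i.e.\ fully oblivious.

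There is no genuinely hard step here, but the one point that deserves emphasis in the write-up is \emph{why density matters}: the accumulation phase touches every coordinate $\mathbf{G^*}[j]$ exactly once per client, so the multiset and order of written addresses is the full range regardless of which gradient entries happen to be zero. For sparse gradients this fails, since the positions written to $\mathbf{G^*}$ would reveal the support of the sparsified gradient --- precisely the leakage analyzed in Section~\ref{sec:security_analysis}. A secondary subtlety I would flag explicitly is that the Gaussian sampling in phase (iii) must itself have a data-independent access pattern (its random coins are not part of $I$); this holds for the standard implementation, and without it the overall sequence would no longer be fixed.
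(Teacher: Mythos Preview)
Your proposal is correct and follows essentially the same approach as the paper: write out the access sequence explicitly, observe that every address and operation is fixed by the loop bounds $(n,d)$ alone, and conclude that equal-length inputs yield identical traces. The paper's proof is terser---it treats only the accumulation phase (the ``Linear'' part proper, with averaging and perturbation handled by a one-line remark in the main text) and simply writes $*$ for the value component, concluding $\delta=0$ outright rather than invoking the memory-encryption engine to get a negligible term; your extra care about the \texttt{val} field and the data-independence of the noise sampler is sound but not needed for the paper's weaker notion of trace.
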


\noindent
The linear algorithm is executed in $O(nd)$ because all the elements of the gradient $\mathbf{G}$ are accessed.
In addition, the averaging operation only accesses $\mathbf{G^*}$ linearly in $O(d)$, which is obviously fully oblivious.

However, when the gradients are \textit{sparsified}, which is often an important scenario in FL, the access pattern of the \textit{linear} algorithm is not oblivious, and sensitive information may be leaked.
The weights of sparse gradients are generally given by tuples of \texttt{index}, which hold the location information of the parameter, and a \texttt{value}, which holds the gradient value.
This is irrespective of its quantization and/or encoding because it requires calculating the sum of the original dense gradients.
Figure \ref{fig:sparse_g} depicts the access pattern when an aggregation operation is used for sparsified gradients.

\begin{proposition}
\label{th:2}
The \textit{linear} algorithm is \textbf{not} oblivious to sparsified gradients.
\end{proposition}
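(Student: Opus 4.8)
The plan is to exhibit a pair of equal-length sparsified inputs on which the \textit{Linear} algorithm produces memory access sequences that are distinguishable with probability $1$, which, by the definition of $\delta$-statistical obliviousness instantiated at $\delta \equiv 1$, shows the algorithm is not oblivious. First I would pin down the formal target: ``not oblivious'' means $\delta \equiv 1$, i.e., there exist inputs $I, I'$ of equal length with $\mathbf{Accesses}^{\mathcal{M}}(\lambda, I)$ and $\mathbf{Accesses}^{\mathcal{M}}(\lambda, I')$ at statistical distance $1$ — equivalently, with disjoint supports. I would then observe that the scan-and-update core of the \textit{Linear} algorithm is a \emph{deterministic} function of the input: it linearly reads $\mathbf{G} = \mathbf{G}_1 \mathbin\Vert \cdots \mathbin\Vert \mathbf{G}_n$ and, for each stored pair $(\texttt{index},\texttt{val})$, issues a read on $\mathbf{G}$ followed by a write to address $\mathbf{G^*}[\texttt{index}]$; the subsequent averaging and Gaussian perturbation touch only $\mathbf{G^*}$ in a fixed linear order, independent of the indices. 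Hence it suffices to produce two equal-length inputs whose deterministic access sequences differ in at least one triple, since two distinct point masses automatically have statistical distance $1$.

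Next I would construct the witnesses. Fix $n$ clients and sparsity parameter $k$, so every sparsified input carries exactly $nk$ (index, value) pairs and all such inputs have equal length. Let $I$ be any input in which client $1$'s first pair has index $j$, and let $I'$ be identical to $I$ except that client $1$'s first pair has index $j' \neq j$ (the values may be left unchanged). When the algorithm processes that pair it writes to $\mathbf{G^*}[j]$ under $I$ and to $\mathbf{G^*}[j']$ under $I'$; the address component of the corresponding triple $a_t$ differs, so $\mathbf{Accesses}^{\mathcal{M}}(\lambda, I) \neq \mathbf{Accesses}^{\mathcal{M}}(\lambda, I')$ as sequences, giving statistical distance $1$. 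To make the argument robust to weaker observation models — an adversary learning only cacheline-granularity addresses, as in page-fault or memory-bus attacks (Section \ref{sec:memory_access_pattern_leak}) — I would additionally pick $j, j'$ so that $\mathbf{G^*}[j]$ and $\mathbf{G^*}[j']$ lie in different cachelines, so the distinction survives coarse observation. Since no $\delta < 1$ can hold, the \textit{Linear} algorithm is not oblivious for sparsified gradients.

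The hard part is essentially nonexistent: as with Proposition \ref{prop:linear}, the claim is immediate once the model of Section \ref{sec:security_analysis} is in place. The only points needing care are (i) matching the negative claim to the formal definition (disjoint supports / statistical distance $1$, not merely ``some distinguishing advantage''), (ii) checking that the two constructed inputs genuinely have equal length, which the fixed sparsity $k$ guarantees, and (iii) noting that the post-aggregation noise addition operates on fixed addresses of $\mathbf{G^*}$ and therefore cannot re-hide the index-dependent write leak. Accordingly I would keep the proof itself short; the substantive content is the downstream consequence — that these index-dependent write addresses reveal which coordinates each client updated — which is developed into a concrete label-inference attack in Section \ref{sec:attack_on_index}.
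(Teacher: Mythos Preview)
Your proposal is correct and follows essentially the same approach as the paper: both arguments write out the deterministic access sequence $\mathbf{Accesses^{sparse}}$, observe that it is determined one-to-one by the sequence of sparse indices, and conclude that two equal-length inputs with differing index sequences yield access patterns at statistical distance $1$. Your version is slightly more explicit about the equal-length requirement, the cacheline-granularity robustness, and the irrelevance of the post-aggregation perturbation, but these are elaborations rather than a different route.
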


\begin{proof}
Linear access to \textbf{$\mathbf{G}$} for sparsified gradients occurs when the access pattern, $\mathbf{Accesses^{sparse}}$, satisfies 
\begin{equation}
\nonumber
\small
\begin{split}
  \mathbf{Accesses^{sparse}} &= \\
  [(\mathbf{G}[1], \mathtt{read}&,*), (\mathbf{G^*}[\mathrm{idx}_{11}], \mathtt{read},*), (\mathbf{G^*}[\mathrm{idx}_{11}], \mathtt{write},*),..., \\
  (\mathbf{G}[nk], \mathtt{read}&,*), (\mathbf{G^*}[\mathrm{idx}_{nk}], \mathtt{read},*), (\mathbf{G^*}[\mathrm{idx}_{nk}], \mathtt{write},*)]
 \end{split}
\end{equation}
\noindent
where the \texttt{index}es of sparsified gradients of user $i$ are $\mathrm{idx}_{i1},..,\mathrm{idx}_{ik}$ for all $i\in [n]$.
The access pattern, $\mathbf{Accesses^{sparse}}$, is deterministic and corresponds in a one-to-one fashion with the sequence of the \texttt{index}es of the input data.
Considering two input data, $I$ and $I'$, with different sequences of \texttt{index}es, no overlap exists in the output distribution.
Then, the statistical distance between them is $1$.
\end{proof}

\noindent
The access pattern on the aggregated gradients, \textbf{$\mathbf{G^*}$}, reveals at least one set of indices $\{\mathrm{idx}_{ij}\, |\, j \in [d]\}$ for each user $i$, depending on the given gradients.
Considering data-dependent sparsifications, such as top-$k$, which are generally used in FL, the gradient indices of the sparsified gradients may be sensitive to the training data.
In the next section, we demonstrate that privacy leakage can be caused on a real-world dataset.

\textbf{Generality and Limitation.} 
Let us now clarify the format and method of sparsified gradients.
Although various quantization and/or encoding methods in FL have been studied(e.g., \cite{sattler2019robust}), quantization is irrelevant to the problem of leakage considered in this study because it affects only the values and not the index, and encoding is irrelevant because it is eventually decoded on the server side.
For example, in \cite{ergun2021sparsified, lu2023top}, the index location information was encoded in $d$-dimensional one-bit array, but the same problem occurred during aggregation.
As aggregation is performed on the original dense gradients, each update requires access to a specific index of the dense gradients ($\mathbf{G^*}$), resulting in identical access patterns.
It should also be noted that risk is sparsification-dependent.
If the client's training data and observed indices are uncorrelated, then index leakage is not considered to be a risk.
For example, when random-$k$ is adopted, as in \cite{ergun2021sparsified}, no risk is involved.
While threshold-based sparsification \cite{sahu2021rethinking} is almost identical to top-$k$, LDP-guaranteed index \cite{liu2020fedsel} and the recently proposed convolution-kernel-based index \cite{wu2022smartidx} are still unclear.
These index information can correlate to some extent with the client's training data, but not as much as top-$k$.
The scope of our study is limited to the demonstration that attacks are possible with the standard top-$k$---the investigation of various other sparsifications are left for future research.

\section{Attack on gradient index}
\label{sec:attack_on_index}

\subsection{Design}
\label{sec:attack_design}

In this section, we design a server-side attack to demonstrate that privacy leakage of the training data can occur based on the index information in the gradients.
We assume a sparsified gradient based on top-$k$ \cite{shi2019convergence, lin2018dgc, sahu2021rethinking}.
The attacker is assumed to satisfy the assumptions listed in Section \ref{sec:problem_scenario}.
The proposed attacks can be used to raise awareness of the security/privacy risks of FL on TEE, which have not been reported in related works \cite{flatee2021, mo2021ppfl, 10.1145/3477114.3488765, zhang2021citadel}, and also serve as an evaluation framework for defenses.

\begin{algorithm}[t]
\small
\caption{Attack on index: \textsc{Jac} or \textsc{NN}}
\label{alg:attack_on_index}
\begin{algorithmic}[1]
\renewcommand{\algorithmicrequire}{\textbf{Input:}}
\renewcommand{\algorithmicensure}{\textbf{Output:}}

\Require $i$: target user, $X_l$: test data with label $l$ $(l\in L)$, round: $T$
\State $\mathbf{index} \leftarrow \{\}$ \Comment{observed access patterns}
\State /* Prepare teacher and target indices */
\State $\mathbf{teacher} \leftarrow \{\}$ \Comment{teacher access patterns to train a classifier}
\For{each round $t=1, \ldots, T$}
    \State /* $T_i$: rounds participated in by user $i$ */
    \If{$t \in T_i$}
        \State /* $A^{(t)}_i$: observed top-$k$ indices of user $i$ of round $t$ */
        \State Store $A^{(t)}_i$ to $\mathbf{index}[i, t]$
        \For{each label $l\in L$}
            \State /* $\theta^{t}$: the global model after round $t$ */
            \State /* $I^{(t)}_l$: top-$k$ indexes training with $\theta^{t}$ and $X_l$ */
            \State Store $I^{(t)}_l$ to $\mathbf{teacher}[l, t]$
        \EndFor
    \EndIf
\EndFor
\State /* Calculate scores for each label $l$ */
\State  $\mathbf{S} \leftarrow$ [] \Comment{form of [(label, similarity)]}
\State /* If \textsc{Jac}: Jaccard similarity-based scoring (\textsc{Sim}) */
\For{each label $l\in L$}
    \State Store ($l$, \textsc{Sim}$(\mathbin\Vert_{\tau \in T_i}{\mathbf{index}[i, \tau]}, \mathbin\Vert_{\tau \in T_i}{\mathbf{teacher}[l, \tau]})$ to $\mathbf{S}$
\EndFor
\State /* If \textsc{NN}: neural network-based scoring */
\State Train the model $M_t$ with $\mathbf{teacher}[l, t]$ $(l\in L)$ for each $t$ $(\in T)$
\For{each label $l\in L$}
    \State Store ($l$, \textsc{predict}$(M_1,...,M_T, \mathbin\Vert_{\tau \in T_i}{\mathbf{index}[i, \tau]}))$ to $\mathbf{S}$
\EndFor
\State /* If \textsc{NN-single}: using single neural network */
\State Train the model $M_0$ with $\mathbin\Vert_{\tau \in T}{\mathbf{teacher}[l, \tau]}$ $(l\in L)$ 
\For{each label $l\in L$}
    \State Store ($l$, \textsc{predict}$(M_0, \mathbin\Vert_{\tau \in T}{\mathbf{index}[i, \tau]}))$ to $\mathbf{S}$
\EndFor
\State /* 1D K-Means clustering \textsc{Kmeans} */
\State [labels, centroid] $\leftarrow$ \textsc{Kmeans}$(\mathbf{S})$
\State \textbf{return} labels of the cluster with the largest centroid

\end{algorithmic}
\end{algorithm}

The goal of the attack is to infer the target client's sensitive label information based on the training data.
For example, when training FL on medical image data, such as image data on breast cancer, the label of the cancer is very sensitive, and participants may not want to reveal this information.
A similar attack goal was considered in \cite{wainakh2022user, fu2022label}.
Our designed attack is based on the intuition that the top-$k$ indices of the locally converged model parameters are correlated with the labels of the local training data.
We train a classifier that accepts the observed index information as the input by supervised learning using a public test dataset and the output is the sensitive label set.
Access to the dataset is justified, for example, by the need for model validation, as described in Section \ref{sec:problem_scenario} and in previous studies on inference attacks \cite{wang2019beyond, hu2022federated}.
We design two basic methods---the Jaccard similarity-based nearest neighbor approach (\textsc{Jac}) and a neural network (\textsc{NN}).
The detailed algorithm is presented in Algorithm \ref{alg:attack_on_index}.
An overview of these methods is provided below:
\begin{enumerate}
  \setlength{\leftskip}{-1.2em}
  \item First, the server prepares the test data $X_l$ with label $l$ for all $l \in L$, where $L$ denotes the set of all possible labels.
  
  \item In each round $t$ $(\in T)$, an untrusted server observes the memory access patterns through side-channels, obtains the index information of the top-$k$ gradient indices $\mathbf{index}[i,t]$ for each user $i$, and stores it (lines 4--8).

  \item The server computes the gradient of the global model with $\theta^t$ and $X_l$, without model updates for each round $t$ $(\in T)$, using the test data categorized by labels, and obtains the top-$k$ indices $\mathbf{teacher}[l, t]$ as teacher data for each label (lines 9--12).
  
  \item After the completion of all rounds $T$, in \textsc{Jac}, we calculate the Jaccard similarity between observed access patterns, $\mathbin\Vert_{\tau \in T_i}{\mathbf{index}[i, \tau]}$ and $\mathbin\Vert_{\tau \in T_i}{\mathbf{teacher}[l, \tau]}$, for each label $l$ (lines 15--17). 
  Jaccard similarity is selected because, in the worst-case scenario, the index information transmitted by a participant is randomly shuffled, rendering the sequence meaningless.

  \item In \textsc{NN}, the attacker trains neural networks using $\mathbf{teacher}[l, t]$, with indices as the features and labels as the target (line 19).
  The outputs of the model are the scores of the label.
  Subsequently, we use a trained model to predict the labels included in the training data corresponding to the input, $\mathbf{index}[i]$.
  For this task, we design the two following NN-based methods.
  In the first method, a model, $M_t$, is trained during each round, $t$, and the output scores of the models are averaged to predict the labels (\textsc{NN}).
  In the second method, a single model, $M_{0}$, is trained using the concatenated indices of the entire round as input and a single output is obtained (\textsc{NN-single}).
  In our experiment, both cases involve a multilayer perceptron with three layers (described in Appendix F).
  Note that as the model input, index information is represented as a multi-hot vector.
  In the case of \textsc{NN-single}, each client participates in only a proportion of the rounds---the indices of the rounds they do not participate in are set to zero as the input to the model.
  Although \textsc{NN-single} is expected to be able to capture the correlation over rounds better than \textsc{NN}, this zeroization may reduce the accuracy.
  Finally, as in \textsc{Jac}, we store the scores for each label obtained via model prediction (lines 20--21).
  
  \item If the number of labels of the target client is known, the scores are sorted in descending order and the highest labels are returned.
  If the number of labels is unknown, K-means clustering is applied to the scores to classify them into 2 classes, and the labels with the highest centroid are returned (lines 23--24).
\end{enumerate}

Finally, the information obtained from the side-channels can also be used to design attacks for other purposes, such as additional features in reconstruction \cite{hitaj2017deep} or other inference attacks \cite{nasr2019comprehensive}.
The aim of this study is simply to demonstrate that the top-$k$ gradient indices that can be observed on untrusted servers contain sufficient information to cause privacy leakages; therefore, we leave the study of attacks for different purposes to future research.

\begin{figure*}[t]
    \centering
    \includegraphics[width=0.95\hsize]{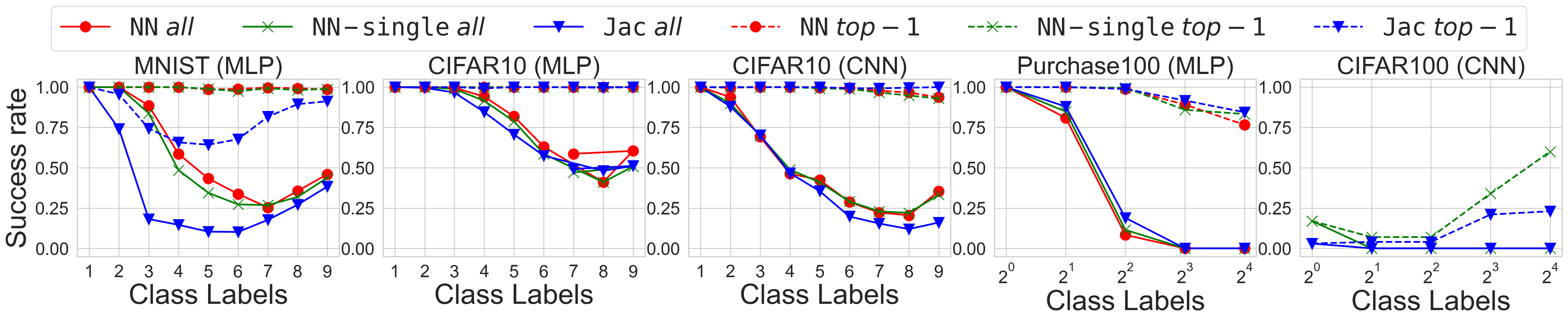}
    \caption{Attack results on datasets with a fixed number of labels: Vulnerable, especially when there are few labels.}
    \label{fig:attack_fixed_label}
\end{figure*}

\begin{figure*}[t]
    \centering
    \includegraphics[width=0.95\hsize]{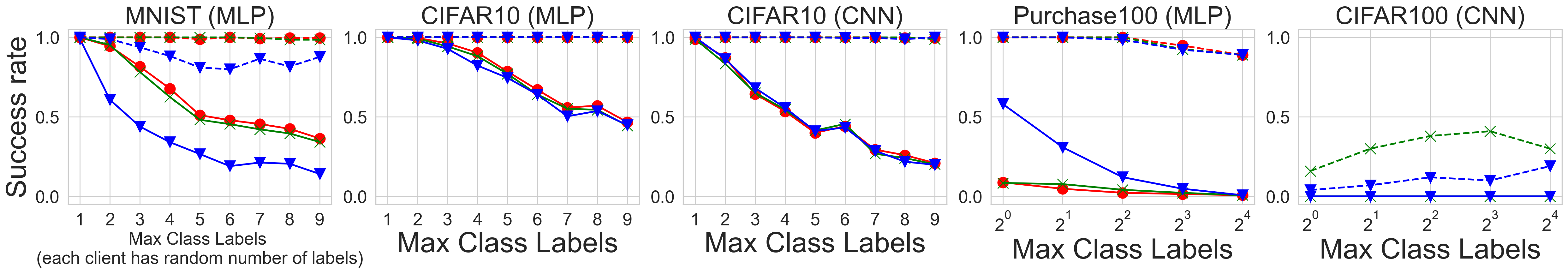}
    \caption{Attack results on datasets with a random number of labels (more difficult setting): When the number of labels is low, the attacker can attack the client without knowing the exact number of labels.}
    \label{fig:attack_random_label}
\end{figure*}

\subsection{Evaluation Task}
In our evaluation of attacks, the server performs an inference attack on any client in the scenario detailed in Section \ref{sec:problem_scenario}.
The clients have a subset of labels, and the attacker's goal is to infer the sensitive label set of a target client based on their training data.
The attacker selects any subset or the entire set of users and performs an inference attack on each user.
We utilize \textit{all} and \textit{top-1} as accuracy metrics for evaluating attack performance.
We define \textit{all} as the percentage of clients that match the inferred labels exactly, e.g., the inferred label set is \{1,3,5\}, and the target client's label set is \{1,3,5\}.
We define \textit{top-1} as the percentage of clients that contain the highest scored inferred label, e.g., the highest scored inferred label is five, and the target client's label set is \{4,5\}, which we consider to be a minimal privacy leak.
In addition, we adjust the distribution of the label set such that the client is able to control the difficulty of the attack.
The number of labels in the set and the number of labels that are \textit{fixed} or \textit{random} are configurable.
In the case of a \textit{fixed} label, all users exhibit the same number of labels, which is known to the attacker.
In the case of the \textit{random} label, the maximum number is assigned, and all users exhibit various numbers of labels.
Generally, \textit{random} label and larger numbers of labels are more difficult to infer.

\subsection{Empirical Analysis}
\label{sec:attack_experiment}
Here, we demonstrate the effectiveness of the designed attack.

\begin{table}[]
\caption{Datasets and global models in the experiments.}
\begin{center}
\renewcommand{\arraystretch}{1.2}
\begin{tabular}{lccc}
\hline
Dataset & Model (\#Params) & \#Label & \#Record (Test) \\ \hline \hline
\textbf{MNIST}  & MLP  (50890)  & 10     & 70000 (10000)\\ \hline
\multirow{2}{*}{\textbf{CIFAR10}} & \multirow{2}{*}{\begin{tabular}[c]{@{}c@{}}MLP (197320) \\ CNN (62006)\end{tabular}} & \multirow{2}{*}{10} & \multirow{2}{*}{60000 (10000)} \\
&  &   & \\ \hline
\textbf{Purchase100} & MLP (44964)               & 100    & 144000 (24000)\\ \hline
\textbf{CIFAR100}    & CNN (201588)              & 100    & 60000 (10000)\\ \hline
\end{tabular}
\label{table:dataset}
\end{center}
\vspace{-10px}
\end{table}

\noindent
\textbf{Setup.} Table \ref{table:dataset} lists the datasets and global models used in the experiments.
Details of the model, including the attacker's NN, are provided in Appendix F.
In addition to the well-known image datasets, MNIST and CIFAR 10 and 100, we also use Purchase100, which comprises tabular data used in \cite{eval2019usenix} for membership inference attacks.
We train the global models using different numbers of parameters, as listed in Table \ref{table:dataset}.
The learning algorithm is based on Algorithm \ref{alg:olive}, in which we provide the sparse ratio, $\alpha$, instead of $k$ in top-$k$.
FL's learning parameters include the number of users, $N$; the participant sampling rate, $q$; the number of rounds, $T$.
The default values are given by $(N, q, T, \alpha)=(1000, 0.1, 3, 0.1)$.
The attack methods are evaluated for \textsc{Jac}, \textsc{NN}, and \textsc{NN-single}, as described in the previous section.
$T$ is smaller than that in normal FL scenarios, which implies that our method requires only a few rounds of attacks.
All experimental source codes and datasets are open\footnote{\label{note:source}\url{https://github.com/FumiyukiKato/FL-TEE}}.

\begin{figure}[t]
    \centering
    \includegraphics[width=0.9\hsize]{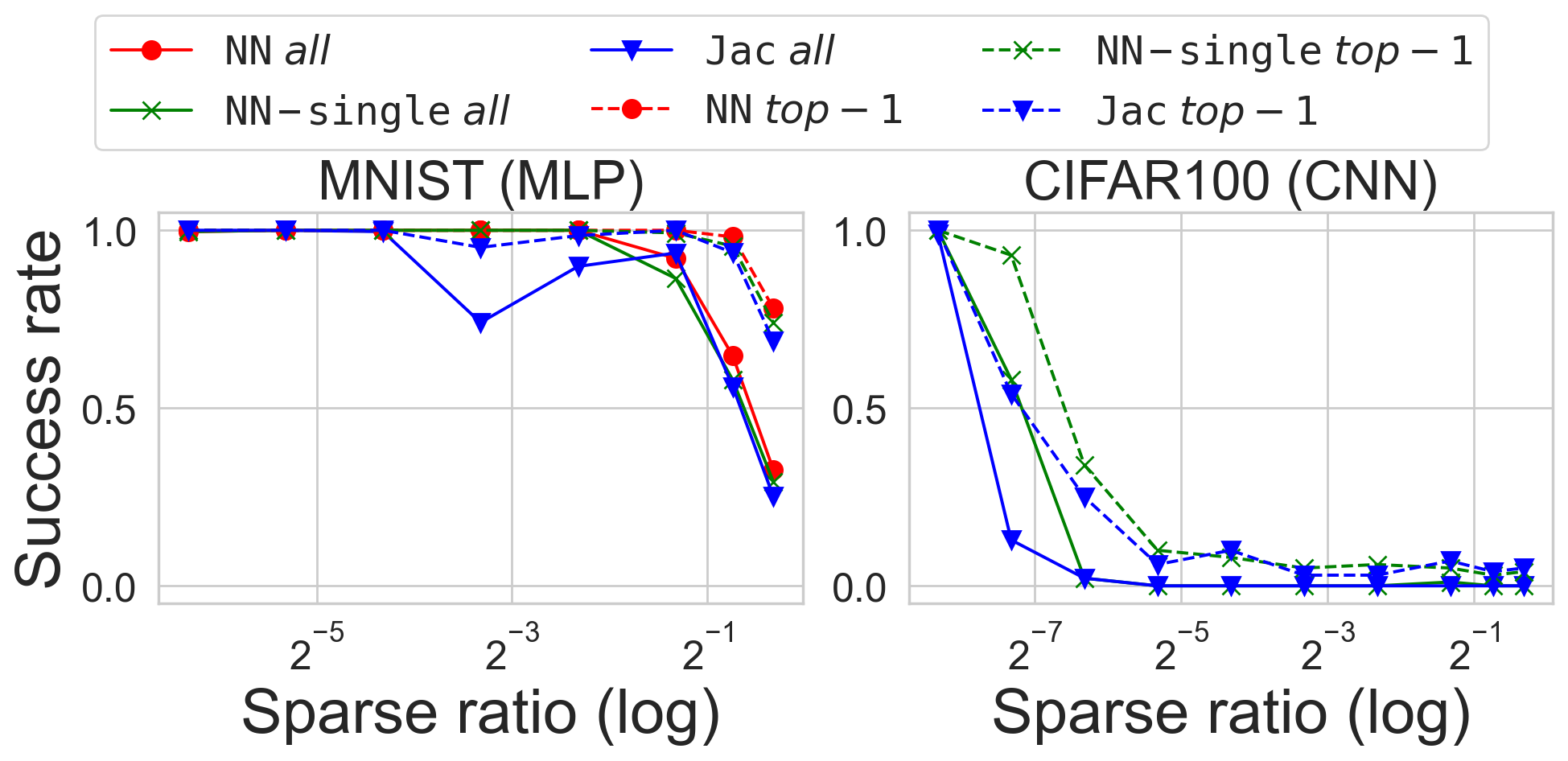}
    \caption{Attack results w.r.t. sparse ratios: Higher the sparsity, the more successful the attack tends to be.}
    \label{fig:attack_sparse_ratio}
\end{figure}

\begin{figure}[t]
    \centering
    \includegraphics[width=0.9\hsize]{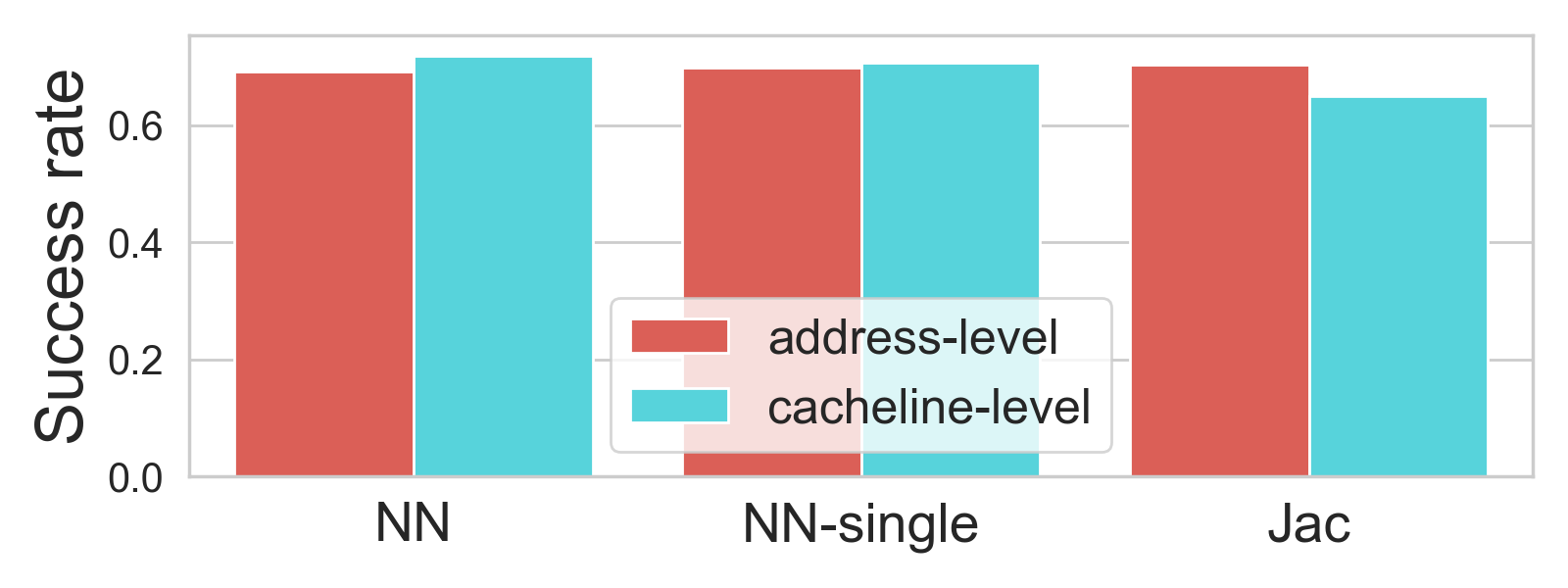}
    \caption{Cacheline-level leakage on CNN of CIFAR10: Attacks are possible with at least slightly less accuracy.}
    \label{fig:attack_cacheline_protection}
\end{figure}

\begin{figure}[t]
    \centering
    \includegraphics[width=0.8\hsize]{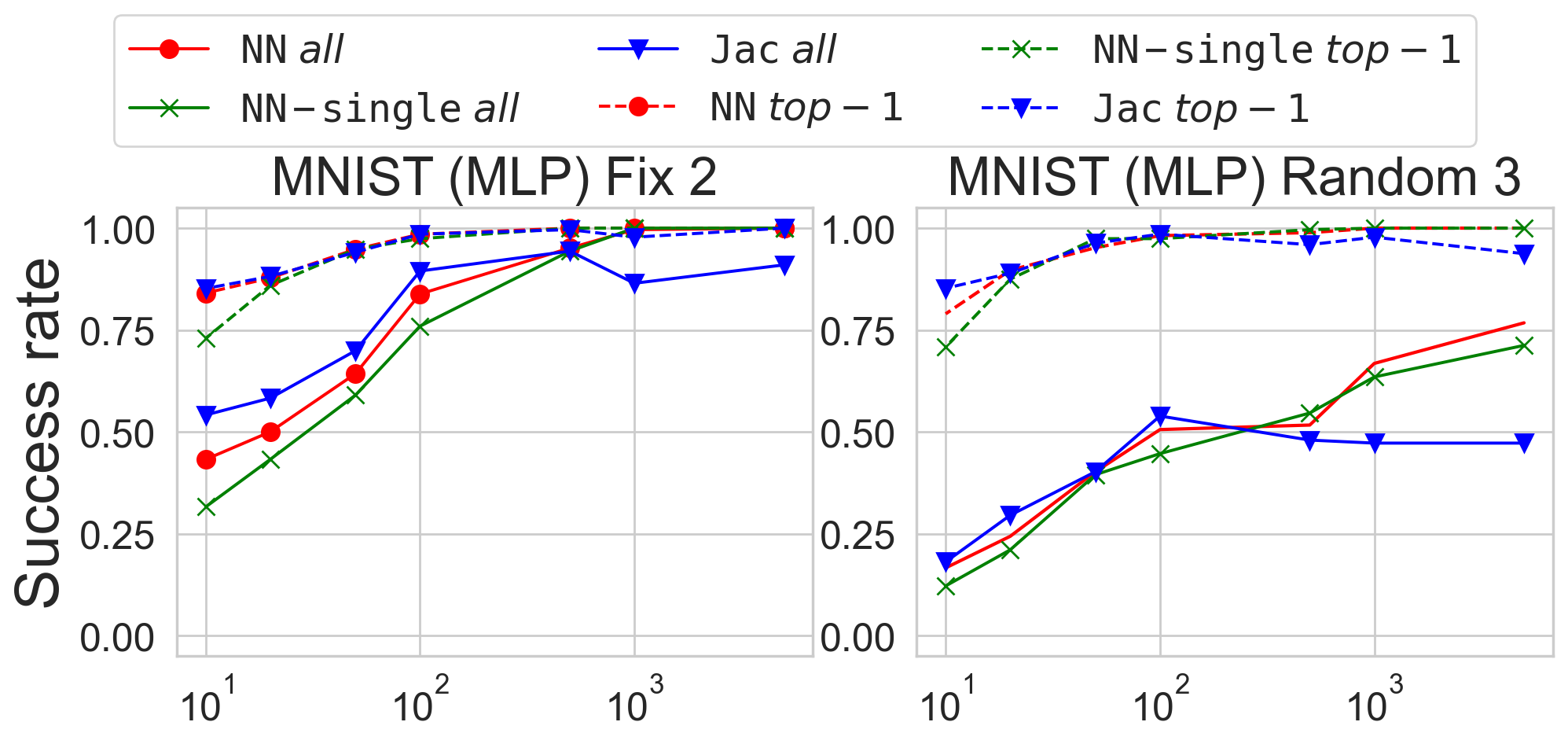}
    \includegraphics[width=0.85\hsize]{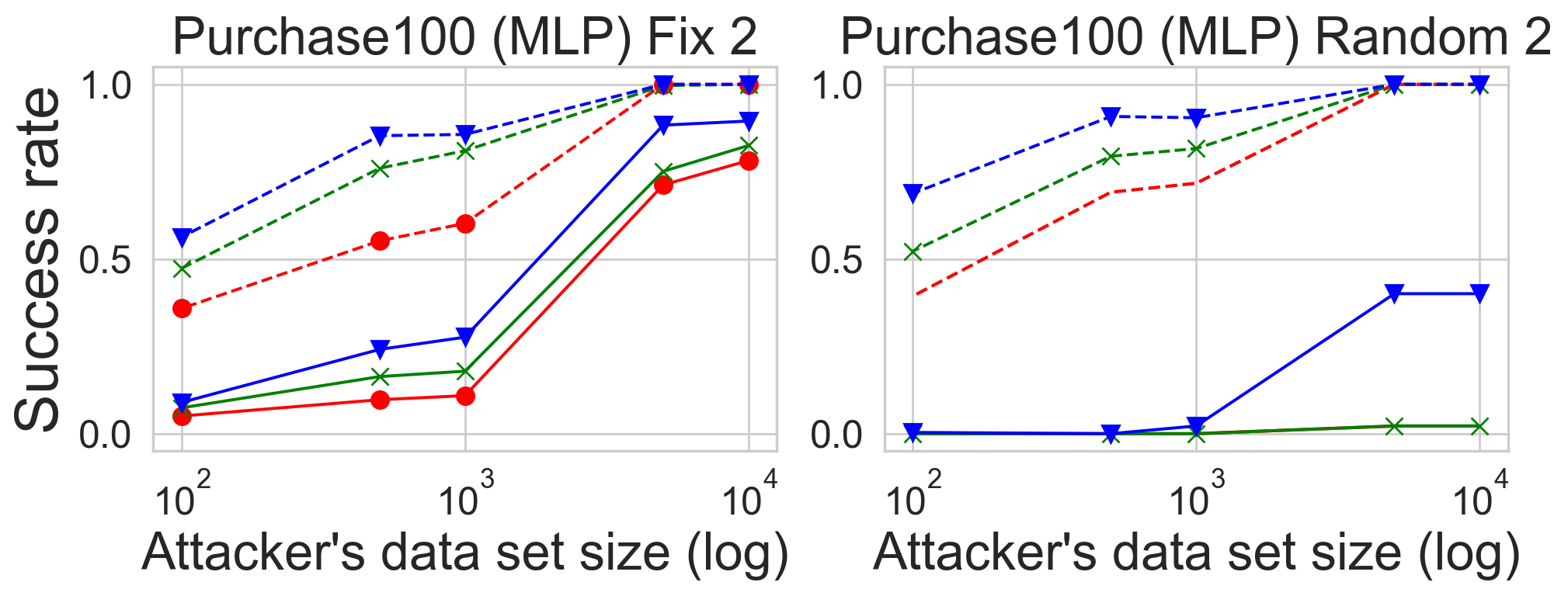}
    \caption{The size of data that an attacker needs to access to achieve high success rate can be very small.}
    \label{fig:attack_reducing_test_dataset}
\end{figure}

\noindent
\textbf{Results.}
Figure \ref{fig:attack_fixed_label} depicts the attack results for \textsc{NN}, \textsc{NN-single}, and \textsc{Jac} on all datasets with a \textit{fixed} number of labels, and Figure \ref{fig:attack_random_label} presents the results with a \textit{random} number of labels.
In CIFAR100, $T=1$ is used because the model size is large. 
The y-axis represents the success rate of the attacks, and the x-axis represents the number of labels possessed by each client.
When the number of labels is small, all three attacks exhibit a high probability of success.
The success rate of \textit{top-1} is high irrespective of the number of labels, whereas \textit{all} decreases with each additional label. 
On CIFAR10, the MLP model maintains a higher success rate for a large number of labels compared to the CNN model.
This indicates that the complexity of the target model is directly related to the contribution of the index information to the attack.
The \textsc{NN}-based method is more powerful on MNIST, but it performs similarly to the other methods on the other datasets.
This indicates that the gradient index information is not complex and can be attacked using simple methods, such as \textsc{Jac}.
The results of \textsc{NN} and \textsc{NN-single} are almost identical; therefore, there is not much effective correlation across the rounds.
When the number of class label is 100 (Purchase100, CIFAR100), the success rate of the attack is reduced.
In particular, the accuracy of CIFAR100 is low in this case.
However, as shown in later, this is surprisingly improved by using a smaller sparse rate.

Figure \ref{fig:attack_sparse_ratio} depicts the relationship between the sparse ratio and attack performance.
The number of client labels is fixed to two.
The results indicate that the sparse ratio is inversely related to the success rate of the attack.
This is because the indices of label-correlated gradients become more distinguishable as the sparsity increases.
In particular, the case of CIFAR100 demonstrates that the attack is successful only when the sparsity ratio is low.
For instance, when the sparsity ratio is 0.3\%, the success rate is almost 1.0.
Thus, sparsity ratio is an important factor in an attack.

Figure \ref{fig:attack_cacheline_protection} depicts a comparison of attack performance based only on index information observed at the cacheline granularity (64 B), which can be easily observed against SGX \cite{xu2015controlled} with CIFAR10 and CNN.
The accuracies are almost identical.
The \textsc{NN}-based method exhibits slightly higher accuracy, whereas \textsc{Jac} exhibits slightly poorer accuracy.
Therefore, the attack is still possible despite observations at the granularity of the cacheline, which indicates that the well-known vulnerability of SGX is sufficient to complete an attack.

Figure \ref{fig:attack_reducing_test_dataset} depicts the evaluation of the size of a dataset required by an attacker to succeed in an attack.
The default test dataset accessible to the attacker is presented in Table \ref{table:dataset}---we randomly reduce it on this basis while maintaining the same number of samples for each label.
We evaluate the number of labels in the fixed and random labels using the MNIST and Purchase100 datasets, respectively.
In MNIST, performance can be preserved even when the amount of data is reduced, which weakens the assumption on dataset size.
For example, it is surprisingly noted that, even with 100 samples (i.e., 10 samples per label and 1\% of the original evaluation), performance is not affected significantly.
On Purchase100, the impact is small, but a meaningful attack is possible with some reduction in data size.

\section{Oblivious Algorithms}
\label{sec:defense}
In this section, we focus on an aggregation algorithm that can cause privacy leakage, as described in the previous section, and discuss potential avenues of attack prevention.
The notation used here is identical to that in Section \ref{sec:security_analysis}.

First, we introduce the general ORAM-based method.
We initialize ORAM with $d$ zero values for the aggregated parameters, $g^*$; update the values with the received $nk$ gradients, $g$, sequentially; and finally retrieve the $d$ values from the ORAM.
Because ORAM completely hides memory access to $g^*$, the algorithm is fully oblivious.
However, as established in the experimental section, even the state-of-the-art PathORAM adapted to TEE \cite{DBLP:conf/ndss/SasyGF18} incurs a significant overhead---thus, a task-specific algorithm is preferable.

\subsection{Baseline method}
\label{sec:baseline}

Full obliviousness can be simply achieved by accessing all memory addresses to hide access to a specific address.
When accessing \textbf{$\mathbf{G^*}[i]$}, a dummy access is performed on \textbf{$\mathbf{G^*}[j]$} for each $j \in [d]$.
For each access, either a dummy or an updated true value is written, and the timing of writing the true value is hidden by an oblivious move (\hyperref[list:o_mov]{\texttt{o\_mov}}).
The Baseline algorithm is described in Algorithm \ref{alg:baseline}.
It accepts the concatenated gradients transmitted by all participants, $g$ ($nk$-dimensional vector), as input and returns the aggregated gradients, $g^*$ ($d$-dimensional vector) as output.
We make linear accesses to $\mathbf{G^*}$ for a number of times equal to the length of $\mathbf{G}$.
Assuming that the memory address is observable at the granularity of the cacheline, as in a traditional attack against the SGX \cite{xu2015controlled}, some optimization may be performed.
When the weight is four bytes (32-bit floating point) and cacheline is 64 bytes, a 16$\times$ acceleration can be achieved.
Irrespective of this optimization, the computational and spatial complexities are $O(nkd)$ and $O(nk+d)$, respectively.

\begin{proposition}
\label{prop:baseline}
Algorithm \ref{alg:baseline} is (cacheline-level) fully oblivious.
{\normalfont (A formal proof is provided in Appendix C.)}
\end{proposition}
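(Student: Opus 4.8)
The plan is to show that the \emph{observable} part of the access sequence produced by Algorithm~\ref{alg:baseline} --- the ordered list of addresses together with their \texttt{read}/\texttt{write} tags --- is a deterministic function of the public quantities $n$, $k$, $d$ and the cacheline width, hence identical for any two inputs $I,I'$ of equal length; the plaintext gradient values never leave the CPU package, so the only residual difference between $\mathbf{Accesses}^{\mathcal{M}}(\lambda,I)$ and $\mathbf{Accesses}^{\mathcal{M}}(\lambda,I')$ is confined to the ciphertexts written back into the EPC, which a secure memory-encryption engine renders indistinguishable up to a quantity negligible in the security parameter $\lambda$. Adding a zero statistical distance on the address/operation component to a negligible distance on the value component yields a negligible $\delta$, which is exactly what ``(cacheline-level) fully oblivious'' requires.

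Concretely, I would decompose one run into three phases and argue each is data-oblivious on its own. (i) The algorithm ingests the concatenated input $\mathbf{G}=\mathbf{G}_1\Vert\cdots\Vert\mathbf{G}_n$ by a single linear scan over the $nk$ (\texttt{index},\texttt{value}) slots in a fixed order; this is oblivious for the same reason the \emph{Linear} algorithm is in Proposition~\ref{prop:linear}. (ii) For each of the $nk$ incoming weights the algorithm makes one full pass over the $\lceil d/B\rceil$ cachelines of $\mathbf{G^*}$ ($B$ weights per cacheline): at cacheline $j$ it issues exactly one \texttt{read} of $\mathbf{G^*}$ at the offset determined by the loop counter $j$, recomputes the cacheline contents in registers, and issues exactly one \texttt{write} back to that same offset --- regardless of whether the current weight's index actually falls in cacheline $j$. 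Thus the address/operation subsequence of phase (ii) is the fixed list $(\mathbf{G^*}[1],\texttt{read}),(\mathbf{G^*}[1],\texttt{write}),\dots$ repeated $nk$ times, independent of the input. (iii) A final linear scan of $\mathbf{G^*}$ copies out the $d$ aggregated values, oblivious by inspection.

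The crux, and the step I expect to require the most care, is the claim in phase (ii) that the \emph{conditional} update introduces no secret-dependent control flow or memory access. Here I would invoke the register-level primitive \texttt{o\_mov} (Listing~\ref{list:o_mov}): the predicate ``does $\mathrm{idx}$ lie in cacheline $j$'' is evaluated without branching, and the choice between the unmodified cacheline and the cacheline with one entry incremented by the gradient value is realized by \texttt{CMOV}, which moves data register-to-register under a flag and emits no memory traffic that depends on that flag; the write-back then overwrites the whole cacheline on every iteration, so the write occurs whether or not an update happened. Consequently the only input-dependent object ever committed to memory is the value payload of those write-backs, which is absorbed by the memory-encryption engine's security and contributes at most negligible statistical distance. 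Combining the three phases, $\mathbf{Accesses}^{\mathcal{M}}(\lambda,I)$ and $\mathbf{Accesses}^{\mathcal{M}}(\lambda,I')$ differ by a negligible amount, establishing full obliviousness at cacheline granularity; the $O(nkd)$ time and $O(nk+d)$ space bounds then read off directly from the loop structure.
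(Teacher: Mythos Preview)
Your argument is correct and follows the same strategy as the paper: exhibit the access trace explicitly and observe that, at cacheline granularity, it is a fixed sequence depending only on $n$, $k$, $d$, and the cacheline width. The paper's proof is much terser---it simply writes down the sequence $[(\mathbf{G}[1],\mathtt{read},*),(\mathbf{G}^*_c[1],\mathtt{write},*),\dots,(\mathbf{G}^*_c[d/c],\mathtt{write},*),\dots]$ and declares it identical for all inputs of the same length, hence $0$-statistically oblivious---whereas you decompose into phases, invoke \texttt{o\_mov} to justify the absence of secret-dependent control flow, and separately treat the \texttt{val} component via the memory-encryption engine. The paper sidesteps that last point entirely by writing $*$ in the value slot of every access triple; your treatment is arguably more honest about what the definition demands, at the cost of downgrading the conclusion from $\delta=0$ to $\delta$ negligible.

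Two small corrections. First, your phase~(iii) is not part of Algorithm~\ref{alg:baseline}; the algorithm simply returns $g^*$ without an additional scan, so you can drop it. Second, be careful with the claim that the inner-loop access is ``at the offset determined by the loop counter $j$'': in the algorithm the word touched inside cacheline $j$ is at position $\mathrm{offset}=idx\bmod c$, which \emph{is} data-dependent. This is harmless precisely because the adversary sees only the cacheline address, and it is the reason the proposition is stated as \emph{cacheline-level} oblivious rather than word-level; you rely on this implicitly but it is worth saying outright.
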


\begin{algorithm}[t]
\small
\caption{Baseline}
\label{alg:baseline}
\begin{algorithmic}[1]
\renewcommand{\algorithmicrequire}{\textbf{Input:}}
\renewcommand{\algorithmicensure}{\textbf{Output:}}

\Require $g=g_1 \mathbin\Vert...\mathbin\Vert g_n$: concatenated gradients, $nk$ length
\Ensure $g^*$: aggregated parameters, $d$ length
\State initialize aggregated gradients $g^*$
\For{each ($idx$, $val$) $\in g$}
    \State /* $c$ is the number of weights included in one cacheline */
    \State /* of\textcompwordmark fset indicates the position of $idx$ in the cacheline */
    \For{each $(idx^*, val^*)$ $\in g^*$ if $idx^*\equiv$ of\textcompwordmark fset $(\mathrm{mod}\; c)$}
        \State $flag$ $\leftarrow$ $idx^* == idx$ \Comment{target index or not}
        \State $val^{\prime}$ $\leftarrow$ \texttt{o\_mov}$(flag, val^*, val^*+val)$ 
        \State write $val^{\prime}$ into $idx^*$ of $g^*$
    \EndFor
\EndFor
\State \textbf{return} $g^*$
\end{algorithmic}
\end{algorithm}

\subsection{Advanced method}
\label{sec:advanced}

\begin{algorithm}[t]
\small
\caption{Advanced}
\label{alg:advanced}
\begin{algorithmic}[1]
\renewcommand{\algorithmicrequire}{\textbf{Input:}}
\renewcommand{\algorithmicensure}{\textbf{Output:}}

\Require $g=g_1 \mathbin\Vert...\mathbin\Vert g_n$: concatenated gradients, $nk$ length
\Ensure $g^*$: aggregated parameters, $d$ length
\State /* \textit{initialization}: prepare zero-valued gradients for each index */
\State $g'$ $\leftarrow$ $\{(1, 0),...,(d, 0)\}$ \Comment{all \texttt{value}s are zero}
\State $g$ $\leftarrow$ $g \mathbin\Vert g'$ \Comment{concatenation}
\State /* \textit{oblivious sort} in $O((nk+d)\log^2{(nk+d)})$ */
\State oblivious sort $g$ by \texttt{index}
\State /* \textit{oblivious folding} in $O(nk+d)$ */
\State $idx$ $\leftarrow$ \texttt{index} of the first weight of $g$
\State $val$ $\leftarrow$ \texttt{value} of the first weight of $g$
\For{each ($idx'$, $val'$) $\in g$} \Comment{Note: start from the second weight of $g$}
        \State $flag$ $\leftarrow$ $idx' == idx$
        \State /* $M_0$ is a dummy index and very large integer */
        \State $idx_{prior}, val_{prior}$ $\leftarrow$ \texttt{o\_mov}$(flag, (idx, val), (M_0, 0))$
        \State write $(idx_{prior}, val_{prior})$ into $idx'$ - 1 of $g$
        \State $idx, val$ $\leftarrow$ \texttt{o\_mov}$(flag, (idx',val'), (idx, val+val'))$
\EndFor
\State /* \textit{oblivious sort} in $O((nk+d)\log^2{(nk+d)})$ */
\State oblivious sort $g$ by \texttt{index} again

\State \textbf{return} take the first $d$ \texttt{value}s as $g^*$

\end{algorithmic}
\end{algorithm}

Here, we present a more advanced approach to FL aggregation.
In cases with large numbers of model parameters, $k$ and $d$ are significant factors and the computational complexity of the Baseline method becomes extremely high because of the product of $k$ and $d$.
As described in Algorithm \ref{alg:advanced}, we design a more efficient \textit{Advanced} algorithm by carefully analyzing the operations on the gradients.
Intuitively, our method is designed to compute $g^*$ directly from the operations on the gradient data, $g$, to eliminate access to each memory address of the aggregated gradients, $g^*$.
This avoids the overhead incurred by dummy access to $g^*$, as in the Baseline.
The method is divided into four main steps:
\textit{initialization} on gradients vector $g$ (line 1), oblivious sort (line 4), \textit{oblivious folding} (line 6), and a second oblivious sort (line 16).
For oblivious sort, we use Batcher’s Bitonic Sort \cite{batcher1968sorting}, which is implemented in a register-level oblivious manner using oblivious swap (\hyperref[list:o_swap]{\texttt{o\_swap}}) to compare and swap at all comparators in the sorting network obliviously.
Appendix E illustrates a running example for better understanding.

As given by Algorithm \ref{alg:advanced}, we first apply an initialization to $g$, where we prepare zero-valued gradients for each index between $1$ and $d$ (declared $g'$) and concatenate them with $g$ (lines 1--3).
Thus, $g$ has length $nk + d$.
This process guarantees that $g$ has at least one weight indexed for each value between $1$ and $d$; however, aggregation of the concatenated $g$ yields exactly the same result as the original $g$ because the added values are all zero.
We then apply an oblivious sort to $g$ using the parameter's index (lines 4--5).
Rather than eliminating the connection between the client and gradient, this serves as a preparation for subsequent operations to compute the per-index aggregate values.
Next, the \textit{oblivious folding} routine is executed (lines 6--14).
It linearly accesses the values of $g$ and cumulatively writes the sum of the values for each index in $g$.
Starting from the first place, it adds each value to the subsequent value if the neighboring indices are identical, and writes a zero-valued dummy index, $M_0$, in place of the original one.
$M_0$ is a large integer.
Otherwise, if the neighboring indices are different, we stop adding values, and the summation of the new index is initiated anew.
Thus, we finally obtain $g$ such that only the last weight of each index bears the correct index and aggregated value, and all the remaining ones bear dummy indices. 
In addition, the initialization process described above guarantees that $d$ distinct indices always exist.
In this phase, the index change-points on $g$ during folding are carefully hidden.
If the index change-points are exposed, the number corresponding to each index (i.e., the histogram of the indices) is leaked, which can cause catastrophic results.
Therefore, oblivious folding employs \hyperref[list:o_mov]{\texttt{o\_mov}} to make conditional updates oblivious and hide not only the memory access of the data, but also low-level instructions.
Finally, we apply an oblivious sort to $g$ (lines 15--16).
After sorting, in $g$, weights with indices between 1 and $d$ are arranged individually, followed by weights with dummy indices.
Finally, taking the \texttt{values} of the first $d$ weights of the sorted $g$, we return this as the final aggregated gradient, $g^*$ (line 17).

\begin{proposition}
Algorithm \ref{alg:advanced} is fully oblivious. 
\end{proposition}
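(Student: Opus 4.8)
The plan is to decompose Algorithm~\ref{alg:advanced} into its successive phases --- \emph{initialization} (Lines 1--3), the first oblivious sort (Lines 4--5), \emph{oblivious folding} (Lines 6--14), the second oblivious sort (Lines 15--16), and the trivial final read of the first $d$ values (Line 17) --- and to show that each phase emits a memory access trace that depends only on the \emph{length} of its input array, never on the contents (recall obliviousness constrains $\mathbf{Accesses}$ only, not the returned values). Because the working array has deterministic length $nk+d$ throughout --- initialization appends a fixed number $d$ of entries, and both sorting and folding are length-preserving --- the concatenation of the per-phase traces is itself input-independent. I would make this rigorous by first stating a short sequential-composition lemma: if $\mathcal{M}_1,\dots,\mathcal{M}_r$ are each fully oblivious and the output length of each $\mathcal{M}_j$ is a fixed function of its input length (independent of input content), then $\mathcal{M}_r\circ\cdots\circ\mathcal{M}_1$ is fully oblivious; its proof is a standard hybrid over the $r$ sub-traces.

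Then I would dispatch the easy phases. Initialization writes the constant pairs $(1,0),\dots,(d,0)$ into fixed appended addresses, so its trace is literally constant. The folding loop, at iteration $i$, reads slots $i-1$ and $i$ and writes slot $i-1$, the addresses being determined by the loop counter alone; every data-dependent decision --- whether the running index is extended or reset, and whether a real pair or the sentinel $(M_0,0)$ is written back --- is routed through \hyperref[list:o_mov]{\texttt{o\_mov}}, a register-level \texttt{CMOV} that by construction produces no data-dependent memory access and no data-dependent branch. Hence the folding trace is again a fixed function of $nk+d$, and the closing read of the first $d$ slots is trivially oblivious.

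The main work, and the step I expect to be the real obstacle, is the oblivious sort. I would argue that Batcher's bitonic network on $m = nk+d$ keys --- padded up to the next power of two with sentinel keys, which is a length-determined operation --- has a \emph{fixed} topology: the ordered sequence of comparator positions $(u_1,v_1),(u_2,v_2),\dots$ depends only on $m$. Each comparator is realized by an \hyperref[list:o_swap]{\texttt{o\_swap}} on the two slots, which reads and writes both slots regardless of the comparison outcome and without branching; thus the sort's access trace is exactly the deterministic sequence induced by the network topology, identical for any two inputs of equal length (in particular, ties among keys change nothing). Since that trace is deterministic, the statistical distance between the traces on any equal-length $I,I'$ is $0$, which is negligible, so each sort phase is fully oblivious; composing the phases via the lemma gives the claim. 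The two points that will need care in the write-up are (i) making the power-of-two padding explicit, so that equal-length inputs genuinely induce identical network sizes, and (ii) verifying that \texttt{o\_mov}/\texttt{o\_swap} guard \emph{every} conditional in the folding loop --- especially the emission of $M_0$ at an index change-point --- since a single unguarded branch there would leak the index histogram, as the text warns.
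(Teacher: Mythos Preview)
Your proposal is correct and follows essentially the same modular argument as the paper: decompose into phases, observe that Batcher's network has a data-independent comparator schedule realized via \texttt{o\_swap}, that folding is a single linear pass with all conditionals hidden behind \texttt{o\_mov}, and conclude the concatenated trace is identical for equal-length inputs, hence $0$-statistically oblivious. Your write-up is more explicit than the paper's own proof --- you spell out the sequential-composition lemma, the initialization and final-read phases, and the power-of-two padding for the sorting network --- but these are elaborations of the same approach rather than a different one.
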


\begin{proof}
The access pattern, $\mathbf{Accesses^{advanced}}$, is somewhat complicated, but obliviousness can be considered using a modular approach.
Our oblivious sort relies on Batcher’s Bitonic Sort, in which sorting is completed by comparing and swapping the data in a deterministic order, irrespective of the input data.
Therefore, access patterns generated using this method are always identical.
In oblivious folding, the gradient is linearly accessed once; thus, the generated access pattern is identical for all input data of equal length.
Finally, $\mathbf{Accesses^{advanced}}$ are identical and independent of inputs of equal length, this implies 0-statistical obliviousness.
\end{proof}

The complexity of the entire operation is $O((nk+d)\log^2{(nk+d)})$ in time and $O(nk+d)$ in space.
The proposed algorithm relies on an oblivious sort, which dominates the asymptotic computational complexity.
We use Batcher’s Bitonic Sort \cite{batcher1968sorting}, which has $O(n\log^2{n})$ time complexity.
The Advanced is asymptotically better than the Baseline because of the elimination of the $kd$ term.

\subsection{Optimization}
\label{sec:optimization}
In this subsection, we describe an optimization method that fits the basic SGX memory characteristics.
The current SGX comprises two major levels of memory size optimization.
The first factor is the size of the L3 cache (e.g., 8 MB).
In SGX, the acceleration is significant because the cache hit reduces not only the memory access time but also the data-decrypting process.
The second factor is the EPC size (e.g., 96 MB).
As mentioned in Section \ref{sec:tee}, accessing data outside the EPC incurs serious paging overhead.
Compared to the proposed methods, the Baseline is computationally expensive; however, most memory accesses are linear. 
Thus, it is greatly accelerated by the high cache hit rates and the prefetch functionality of the CPU.
However, in Advanced, the low locality of memory accesses in Batcher's sort reduces the cache and EPC hit rates.

Therefore, optimization is performed by introducing a function to split users into appropriate groups before executing Advanced to keep the data processed at one time within the EPC size.
This procedure involves the following steps: (1) divide into groups of $h$ users each; (2) aggregate values for each group using Advanced; (3) record the aggregated value in the enclave, and carry over the result to the next group; and (4) only average the result when all groups have been completed and then load them from the enclave to the untrusted area.
Note that the improvement to Advanced does not change its security characteristics.
An external attacker can only see the encrypted data, and any irregularities in the order or content of the grouped data can be detected and aborted by enclave.
The key parameter is the number of people, $h$, in each group.
The overall computational complexity increases slightly to $O(n/h((hk+d)\log^2{(hk+d)}))$.
However, this hides the acceleration induced by cache hits and/or the overhead incurred by repeated data loading.
Basically, although lowering $h$ improves the benefit of cache hits, lowering it too much results in a large amount of data loading.
The optimal value of $h$ is independent of data and can be explored offline.
Our results indicate that there exists an optimal $h$ that achieves the highest efficiency in the experiment.

\subsection{Relaxation of Obliviousness}
\label{sec:extend_method}

We investigate further improvements by relaxing the condition of full obliviousness to achieve better efficiency.
A relaxed security definition that has recently garnered attention is that of \textit{ differentially oblivious} (DO) \cite{chan2019foundations, allen2019algorithmic, 10.1145/3243734.3243851, chu2021differentially, vldb2022do}.
DO is DP applied to obliviousness.
This relaxation can theoretically improves the efficiency from full obliviousness.
In practice, improvements have been reported for RDB queries \cite{vldb2022do} whose security model, in which access pattern leakage within the enclave is out of the scope, differs from ours.

However, DO is unlikely to work in the FL setting.
DO approaches commonly guarantee DP for the histogram of observed memory accesses.
We construct a DO algorithm based on \cite{allen2019algorithmic, 10.1145/3243734.3243851}.
The procedure involves the following steps: pad dummy data, perform an obvious shuffle (or sorting), and update $g^*$ by performing linear access on \textbf{$\mathbf{G}$}.
The observed memory access pattern is equivalent to a histogram of the indices corresponding to all gradients, and the dummy data are required to be padded with sufficient random noise to make this histogram DP.
However, this inevitably incurs prohibitive costs in the FL setting.
The first reason for this is that the randomization mechanism can only be implemented by padding dummy data \cite{case2021privacy}, which implies that only positive noise can be added, and the algorithms covered by padding are limited (e.g., the shifted Laplace mechanism).
The second reason is critical in our case and differs from previous studies \cite{allen2019algorithmic, 10.1145/3243734.3243851}. 
Considering that the ML model dimension, $d$, and even the sparsified dimension, $k$, can be large, noise easily becomes significant.
For example, considering the DO guaranteed by Laplace noise, where $k$ denotes the sensitivity and $d$ is the dimension of the histogram, the amount of noise is proportional to $kd$ and multiplied by a non-negligible constant, owing to the first reason \cite{allen2019algorithmic}.
This produces huge array data to which oblivious operations must be applied, resulting in a larger overhead than in the fully oblivious case.

\subsection{Experimental results}
\label{sec:experiment}

In this section, we demonstrate the efficiency of the designed defense method on a practical scale.
Because it is obvious that the proposed algorithms provide complete defense against our attack method, their attack performances are not evaluated here.
In addition, our previous algorithms do not degrade utility---the only trade-off for enhanced security is computational efficiency.

\noindent
\textbf{Setup: } We use an HP Z2 SFF G4 Workstation with a Intel Xeon E-2174G CPU, 64 GB RAM, and 8 MB L3 cache, which supports the SGX instruction set and has 128 MB processor reserved memory, of which 96 MB EPC is available for user use.
We use the same datasets as those in Table \ref{table:dataset} and synthetic data.
Note that the proposed method is fully oblivious and its efficiency depends only on the model size.
The aggregation methods are the \textit{Non Oblivious} (\textit{linear} algorithm in Section \ref{sec:security_analysis}), the \textit{Baseline} (Algorithm \ref{alg:baseline}), the \textit{Advanced}(Algorithm \ref{alg:advanced}), and \textit{PathORAM}.
We implement PathORAM based on an open-source library\footnote{\url{https://github.com/mobilecoinofficial/mc-oblivious}} that involves a Rust implementation of Zerotrace \cite{DBLP:conf/ndss/SasyGF18}.
The stash size is fixed to 20.
In the experiments, we use \textit{execution time} as an efficiency metric.
We measure the time required by an untrusted server from loading the encrypted data to the enclave to completion of aggregation.

\begin{figure}[t]
    \centering
    \includegraphics[width=0.90\hsize]{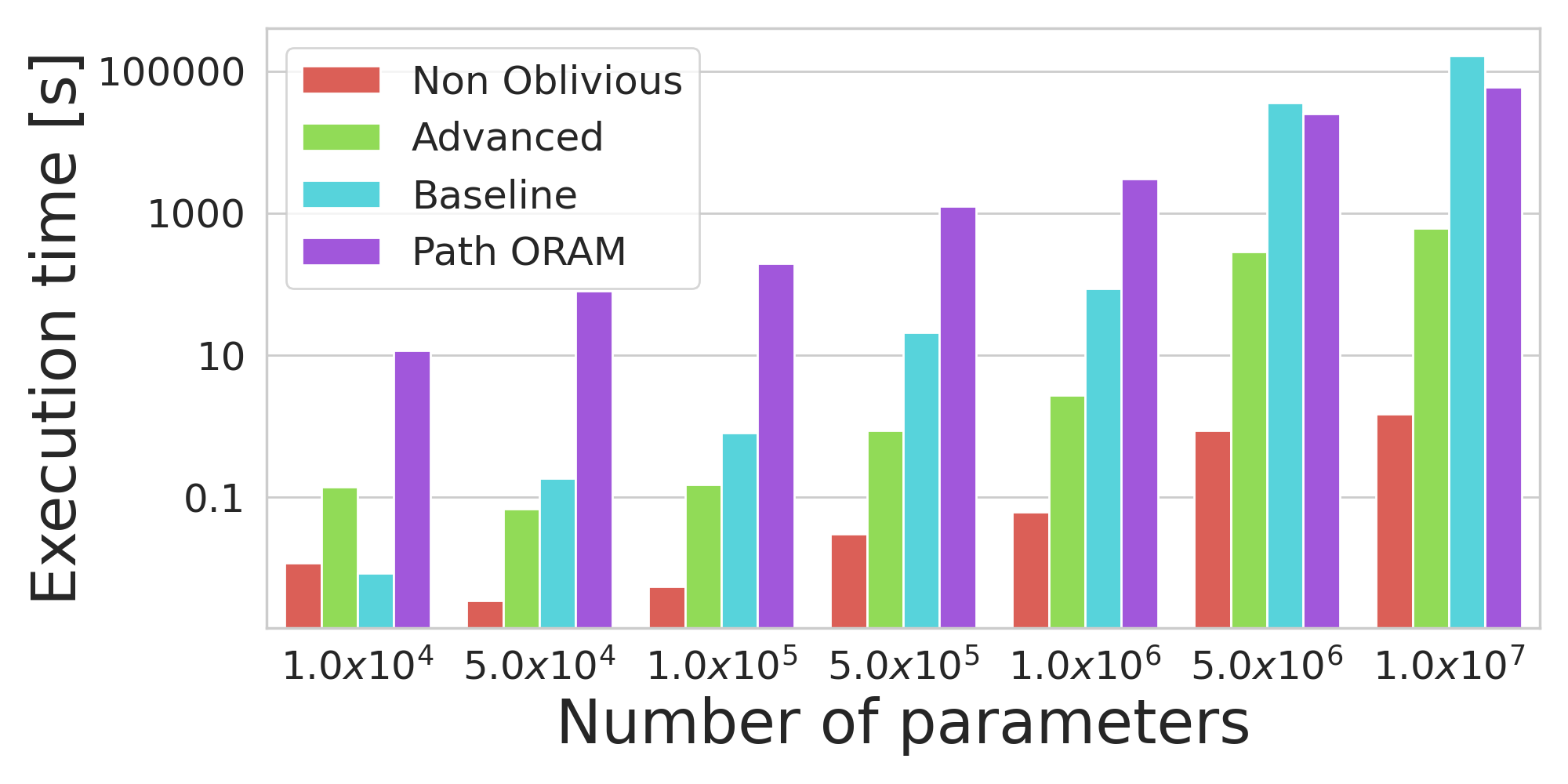}
    \caption{Performance results on a synthetic dataset w.r.t. models of various sizes: \textit{Advanced} functions efficiently. $\alpha$ (sparse ratio) $=0.01$ and $n$ (number of clients per round) $=100$.}
    \label{fig:performance_on_artificial_data}
\end{figure}

\noindent
\textbf{Results: } Figure \ref{fig:performance_on_artificial_data} depicts the execution time for the aggregation operation on the synthetic dataset with respect to model size.
$\alpha$ is fixed to 0.01, and the x-axis represents the original model parameter size, $d$.
The proposed \textit{Advanced} is approximately one order of magnitude faster than \textit{Baseline}.
Moreover, it is more robust with respect to an increase in the number of parameters.
Only when the number of parameters is very small is \textit{Baseline} faster than \textit{Advanced}, because when the model is extremely small, \textit{Baseline}'s simplicity becomes dominant.
\textit{PathORAM} also incurs a large overhead.
The theoretical asymptotic complexity of the original PathORAM-based algorithm is $O((nk)\log{(d)})$ because a single update on ORAM can be performed in $O(\log{(d)})$.
However, this is an ideal case and the overhead of the constant factor is large when PathORAM is adapted to the SGX security model (i.e., ZeroTrace \cite{DBLP:conf/ndss/SasyGF18}).
The overhead is primarily induced by the \textit{refresh} operation corresponding to each update and the oblivious reading of the position maps.
The result suggests that \textit{PathORAM}'s superiority does not appear until the data size increases hugely.
Overall, the results indicate that the aggregation process can be completed in a few seconds, even if the model scale involves approximately 1M parameters. 

\begin{figure}[t]
    \centering
    \includegraphics[width=0.9\hsize]{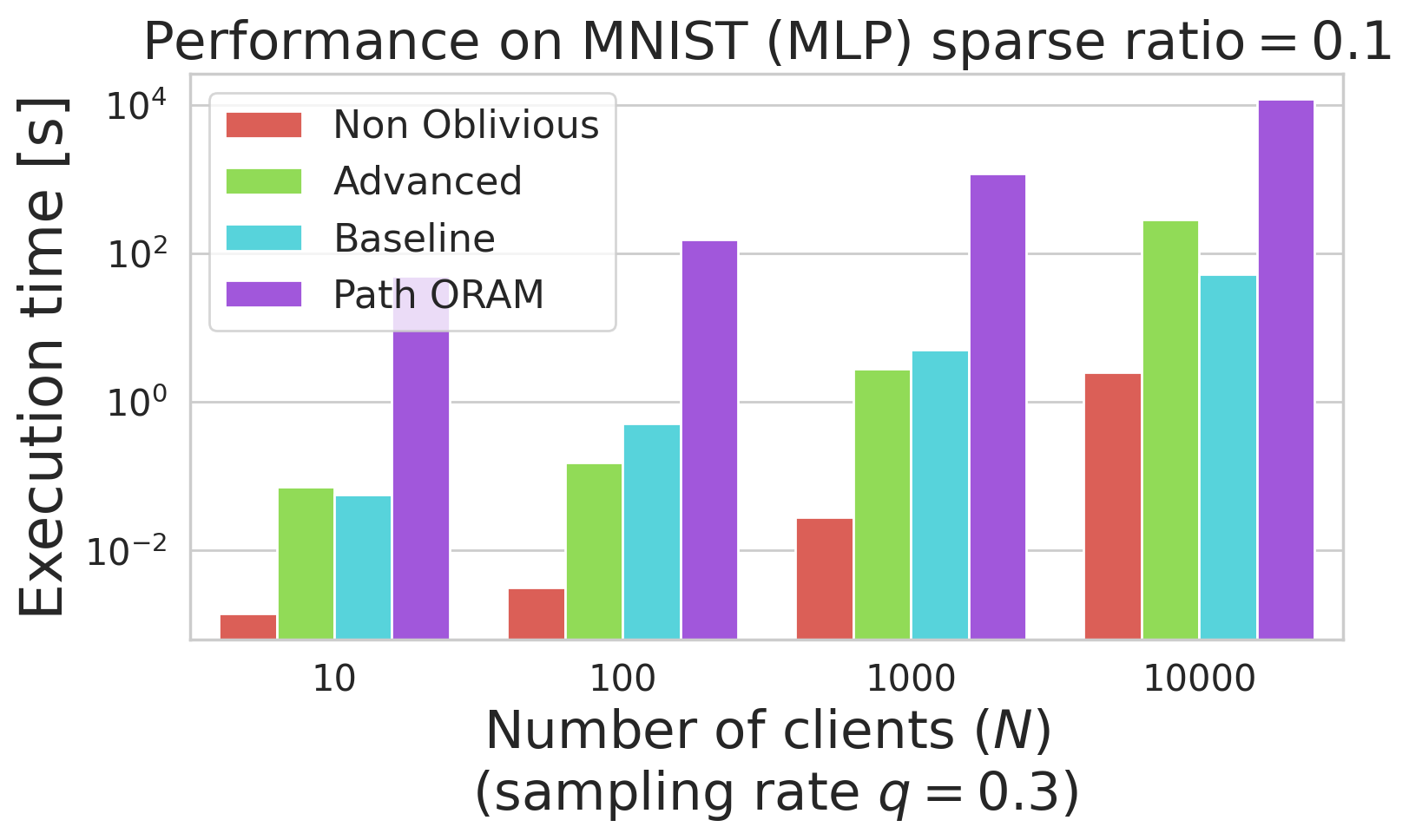}
    \caption{Performance results w.r.t. various numbers of clients ($N$) at low sparsity ($\alpha=0.1$): the \textit{Advanced} gradually worsens with increasing number of clients.}
    \label{fig:performance_on_mnist_user}
\end{figure}

\begin{figure}[t]
\begin{minipage}[t]{0.48\hsize}
    \centering
    \includegraphics[width=\hsize]{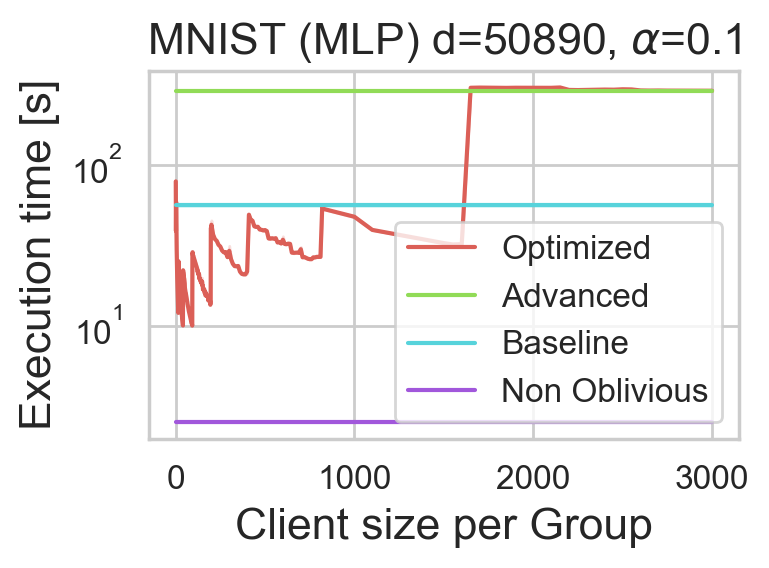}
\end{minipage}
\hfill
\begin{minipage}[t]{0.5\hsize}
    \centering
    \includegraphics[width=\hsize]{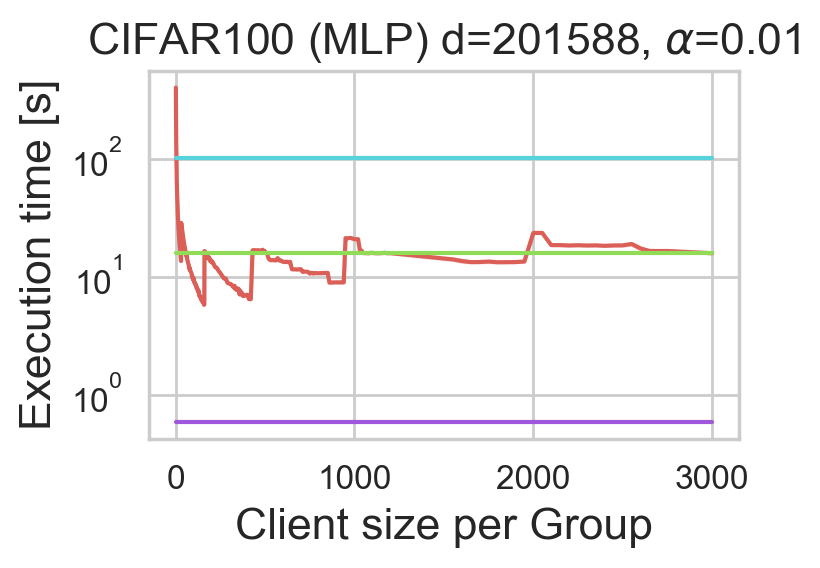}
\end{minipage}
\caption{The effects of optimizing the \textit{Advanced} on MLP models on MNIST (left) and CIFAR100 (right).}
    \label{fig:client_optimized_mnist}
\end{figure}

Figure \ref{fig:performance_on_mnist_user} depicts the performances on MNIST (MLP) corresponding to various numbers of clients and low sparsity ($\alpha=0.1$).
The \textit{Baseline} method is more efficient when the number of clients, $N$, is large ($10^4$).
Firstly, the model size $d$ is fairly small (i.e., MNIST (MLP) consists of only 50K parameters).
Hence, the overhead of the dummy access operations of \textit{Baseline} is not significant.
The second reason is that the lower sparsity and higher number of clients increases $nk$, which increases the overhead for both \textit{Baseline} and \textit{Advanced}, but affects \textit{Advanced} more, as explained by the analysis of cache hits in Section \ref{sec:optimization}.
At $N=10^4$, the memory size required by \textit{Advanced} is given by (vector to be obliviously sorted) = $5089*8*3000+50890*8 \approx $ 122 MB (> 96 MB of EPC size) since each cell of gradient is 8 bytes (32-bit unsigned integer for index and 32-bit floating point for value).
Batcher's sort requires repeated accesses between two very distant points on the vector, which could require a large number of pagings until \textit{Advanced} finishes; however, in \textit{Baseline}, this hardly occurs.
However, the optimization introduced in Section \ref{sec:optimization} successfully addresses this problem.

Figure \ref{fig:client_optimized_mnist} illustrates the effects of the optimization method on \textit{Advanced}.
The left figure shows the results under the same conditions as the rightmost bars in Figure \ref{fig:performance_on_mnist_user} ($N=10^4$), indicating that \textit{Advanced} is dramatically faster with an optimal client size.
When the number of clients per group, $h$ (represented along the x-axis), is small, the costs of iterative loading to the enclave become dominant, and the overhead conversely increases.
However, if $h$ is gradually increased, the execution time decreases.
Considering that the size of the L3 cache is 8 MB and data size per user is $d\alpha=0.04$ MB, the L3 cache can accommodate up to approximately 200 clients.
The results of MNIST (MLP) indicate that the lowest is, approximately 10 s, at around $h=100$, which is a significant improvement compared to 290 s in the original \textit{Advanced}.
The small waviness of the plot appears to be related to the L2 cache (1 MB), which does not have an impact as large as that of the L3 cache.
The efficiency decreases significantly around $h=2000$, owing to the EPC paging.
The figure on the right depicts the results on CIFAR100 (MLP) at $\alpha=0.01$ and $N=10^4$.
In this case, \textit{Advanced} is initially much faster, but there is an optimal $h$ that can be further improved.
The pre-optimization execution time of 16 s is reduced to 5.7 s at around 150 clients.

\subsection{Discussion}
\label{sec:defense_discussion}
\textbf{Threat assumption.}
Boenisch et al. \cite{boenisch2021curious} reported that \textit{malicious} servers improve inference attack performance beyond \textit{semi-honest}.
This type of attack involves crafting global model parameters (called \textit{trap weights} in \cite{boenisch2021curious}) and controlling client selection in rounds to highlight the updates of the target user by a malicious server.
To prevent parameter tampering, \cite{bottoni2022verifiable} proposed a defense strategy using a cryptographic commitment scheme.
The \method can adopt a similar strategy based on a cryptographic signature.
Aggregation is performed within the enclave, and the aggregated global model is signed with the private key in the enclave.
This ensures that the model is not tampered with outside the enclave, i.e., malicious server.
Any client can verify this using a public key which can be easily distributed after RA.
In addition, TEE prevents malicious client selection by securely running in the enclave.
Therefore, privacy is not violated at least such type of the attack.
Other possible malicious server behaviors can influence the security of the \method, including denial-of-service (DoS) attacks \cite{jang2017sgx}, which are outside the threat model of the \method, as well as TEE and are difficult to prevent.

\textbf{Security of SGX.}
Finally, we discuss the use of SGX as a security primitive against known attacks.
According to \cite{nilsson2020survey}, the objectives of attacks against SGX can be classified into the following three: (1) stealing memory/page access patterns or instruction traces \cite{xu2015controlled, van2017telling, brasser2017software, lee2017inferring}, (2) reading out memory content \cite{chen2019sgxpectre, van2018foreshadow}, and (3) fault injection \cite{murdock2020plundervolt}.
(1) is the target of our defense.
The speculative execution attacks of (2) are mostly handled by microcode patches.
Hence, the protection is usually not required in the application.
However, if the microcode is not updated, the gradient information of the enclave may be stolen by a malicious attacker, which is beyond the scope of this study.
The fault injection of (3) is covered within the scope of microcode/hardware \cite{nilsson2020survey, murdock2020plundervolt} and lies outside our security. 
This may cause DoS even using TEE \cite{jang2017sgx}.

In addition, another risk exists if malicious code is embedded in the code executed in the enclave.
This can be prevented by verifying the enclave state using RA; however, this requires the source code to be publicly available and assessed.
Further, as discussed in \cite{van2019tale}, the SDK may involve unintended vulnerabilities.
To benefit from the security of SGX, the code of TCB must be written properly.

\section{Related works}
\label{sec:related_works}

\textbf{Security and Privacy threats in FL.}
FL contains many attack surfaces because of its decentralized and collaborative scheme.
These can be broadly classified into inference attacks by semi-honest parties \cite{nasr2019comprehensive, wainakh2022user, fu2022label} and attacks that degrade or control the quality of the model by malicious parties \cite{10.1145/3322205.3311083, zhao2021sear, pmlr-v108-bagdasaryan20a}.
However, \cite{boenisch2021curious} demonstrated that malicious servers may enable effective inference attacks by crafting aggregated parameters.
Our target is taken to be an inference attack by a semi-honest server.
Inference attacks include reconstruction \cite{hitaj2017deep, bhowmick2018protection}, membership \cite{nasr2019comprehensive}, and label inferences \cite{wainakh2022user, fu2022label}.
In particular, it has been reported that shared parameters observed by a server contain large amounts of private information \cite{zhu2020deep, zhao2020idlg}.
Our work targets gradient-based label inference attacks, \cite{wainakh2022user, fu2022label} use the gradients themselves, focusing on the values, and not only on the indices leaking from the side-channel, as in our method.
To the best of our knowledge, this is the first study to demonstrate label inference using only sparsified index information.

Secure aggregation (SA) \cite{mohamad2023sok} is a popular FL method for concealing individual parameters from the server and it is based on the lightweight pairwise-masking method \cite{DBLP:conf/icml/KairouzL021, 10.1145/3133956.3133982, ergun2021sparsified}, homomorphic encryption \cite{aono2017privacy, hao2019towards} or TEE \cite{zhang2021shufflefl, zhao2021sear}.
Another approach is to ensure (local) DP for the parameter to privatize the shared data; however, this sacrifices the utility of the model \cite{zhao2020privatedl, zhao2020local, ijcai2021-217}.
In this study, we study SA using TEE---further details are provided in the next paragraph.
Recent studies have investigated combinations of SA and sparsification, such as random-$k$ \cite{ergun2021sparsified} and top-$k$ \cite{lu2023top}.
However, these are not in harmony because they require the same sparsified indices among clients for mask cancellation.
\cite{lu2023top} proposed generation of common masks by taking a union set of top-$k$ indexes among clients, which incurs extra communication costs and strong constraints.
This can be serious for the top-$k$ because, in fact, Ergun et al. \cite{ergun2021sparsified} showed that the top-$k$ indices exhibits little overlap between clients, which is especially noticeable in the non-i.i.d. as in FL.
In \cite{ergun2021sparsified}, only a pair of users exhibited a common index; however, this was applicable only to random-$k$ sparsification.
In the case of TEE, a common index or random-$k$ is not required; but, individual indices can still be leaked through side-channels.
Therefore, our work focuses on attacks and defense strategies at this point.

\textbf{FL with TEE.} 
Using TEE in FL is a promising approach \cite{mo2021ppfl, zhang2021shufflefl, 10.1145/3477114.3488765, zhang2021citadel, flatee2021} in this context.
In addition to the confidentiality of gradients (i.e., SA functionality), TEE provides remote program integrity and verifiability via \textit{remote attestation}.
The major difference from centralized ML using TEE \cite{ohrimenko2016oblivious, hunt2018chiron} is that the training data are not shared to the server and they are not centralized in the latter case, which can be critical because of privacy or contractual/regulatory reasons or for practical reasons, i.e., big and fast data at multiple edges.
It is also important to outsource heavy computations required for ML training from TEE's limited computational resources to external clients.
PPFL \cite{mo2021ppfl} uses a TEE to hide parameters to prevent semi-honest client and server attacks on a global model.
Citadel \cite{zhang2021citadel} addressed the important goal of making the design of models confidential in collaborative ML using TEE.
However, side-channel attacks were not covered.
In \cite{zhang2021shufflefl} and \cite{10.1145/3477114.3488765}, the gradient aggregation step was taken to be hierarchical and/or partitioned using multiple servers such that the gradient information could only be partially observed by each server.
The authors assumed reconstruction attack and that a gradient leakage of less than 80\% was acceptable, which differs from our assumption completely.
In this study, the attack is based only on the gradient index information, and the goal is label inference.
Further, our proposed defense is more practical since we require only one server and one TEE, compared to the aforementioned method of distributed processing, which assumes multiple non-colluding servers with TEEs.
Flatee \cite{flatee2021} used TEE and DP in FL.
\cite{flatee2021} mentioned server-side obliviousness, but did not provide any analysis and solution for the leakages via side-channels.
Our study includes an analysis of access patterns in the aggregation procedure of FL and the design and demonstration of attack methods to motivate our defenses thoroughly in addition to specific solutions that lead to stronger security than any other method in FL on a single central TEE.

\textbf{Oblivious techniques.} 
The oblivious algorithm \cite{goldreich1987towards, pathoram2013ccs, ohrimenko2016oblivious} is known to induce only independent memory access patterns for the input data.
Although PathORAM \cite{pathoram2013ccs} is the most efficient ORAM implementation, it assumes a private memory space of a certain size (called as \textit{client storage}) and is not applicable to Intel SGX \cite{DBLP:conf/ndss/SasyGF18}.
Zerotrace \cite{DBLP:conf/ndss/SasyGF18} adapted PathORAM to the SGX security model, in which the register is only private memory.
The authors used the oblivious primitive proposed in \cite{ohrimenko2016oblivious}, in which the program did not leak instruction sequences from the CPU register, using x86 conditional instructions.
Our proposed algorithm also uses the low-level primitives; however, high-level algorithms are considerably different.
\cite{zheng2017opaque} studied oblivious SQL processing.
Their proposal included a \textit{group-by} query, which is similar to our proposed algorithm in concept.
Our aggregation algorithm computes the summed dense gradients based on multiple sparse gradients, which can be viewed as a special case of the \textit{group-by} query.
But, our method is more specialized, for instance, we first prepare the zero-initialized dense gradients to hide the all of index set that are included and then obliviously aggregated, which is impossible in the case of \textit{group-by}.
In addition, the aforementioned algorithms are fundamentally different because they focus on the data distributed across nodes.
Further, \cite{zheng2017opaque} did not consider the technique proposed by \cite{ohrimenko2016oblivious} for linear access, which can induce additional information leaks in the conditional code \cite{xu2015controlled}.
\cite{rane2015raccoon, sinha2017compiler} studied compiling and transforming approaches from high-level source code to low-level oblivious code.
They proposed a compiler that automatically identifies non-oblivious parts of the original source code and fixes them. But, the authors did not provide customized high-level algorithms for specific purposes, unlike our method.
The Differentially Obliviousness (DO) \cite{chan2019foundations, allen2019algorithmic, vldb2022do} is described in detail in Section \ref{sec:extend_method}.

\section{Conclusions}
\label{sec:conclusion}
In this study, we analyzed the risks of FL with server-side TEE in a sparsified gradient setting, and designed and demonstrated a novel inference attack using gradient index information that is observable from side-channels.
To mitigate these risks, we proposed an oblivious federated learning system, called the \method, by designing fully oblivious but efficient algorithms.
Our experimental results demonstrated that the proposed algorithm is more efficient than the state-of-the-art general-purpose ORAM and can serve as a practical method on a real-world scale.
We believe that our study is useful for realizing privacy-preserving FL using a TEE.

\begin{acks}
 This work was supported by the Research Fund of JST CREST (No. JPMJCR21M2), JST SICORP (No. JPMJSC2107), and JSPS KAKENHI (21K19767, 22H03595).
\end{acks}


\bibliographystyle{ACM-Reference-Format}
\bibliography{sample}

\clearpage
\appendix

\section{Oblivious primitives}
\label{appendix:o_primitives}
Here we describe the detailed implementation of the oblivious primitive we used.
The C inline assembler-like pseudo-code is shown here.
However, the Rust implementation we actually used is available in the public repository.

\begin{lstlisting}[basicstyle=\ttfamily\footnotesize, frame=single,caption={Oblivious move based on \texttt{CMOV} },captionpos=b,label={list:o_mov}]
int o_mov(bool flag, uint64 x, uint64 y) {
    /* inline assembly */
    /* register mapping: 
        flag  => ecx, x  => rdx, y  => r8  */
    mov rax, rdx
    test ecx, -1
    cmovz rax, r8
    return rax
}
\end{lstlisting}

\begin{lstlisting}[basicstyle=\ttfamily\footnotesize, frame=single,caption={Oblivious swap based on \texttt{CMOV} },captionpos=b,label={list:o_swap}]
int o_swap(bool flag, uint64 x, uint64 y) {
    /* inline assembly */
    /* register mapping: 
        flag  => rax, x  => rdx, y  => r8  */
    test rax, rax
    mov r10, r8
    mov r9, rdx
    mov r11, r9
    cmovnz r9, r10
    cmovnz r10, r11
    mov rdx r9
    mov r8, r10
}
\end{lstlisting}

\section{General FL Aggregation Algorithm}
\label{appendix:aggregation_alg}
We show a general FL aggregation algorithm.
The main focus here is on which memory addresses are accessed in the operation.

\begin{algorithm}[h!]
\small
\caption{Linear algorithm (and averaging and perturbing)}
\label{alg:linear}
\begin{algorithmic}[1]
\renewcommand{\algorithmicrequire}{\textbf{Input:}}
\renewcommand{\algorithmicensure}{\textbf{Output:}}

\Require $\mathbf{G} = \mathbf{G}_1 \mathbin\Vert ... \mathbin\Vert  \mathbf{G}_n$ where $\mathbf{G}_p$ $(p\in [n])$ is gradient from user $p$ and $k$ length vector, $\mathbf{G}$ is $nk$ length vector and $\mathbf{G}$'s element $g_q$ $(q\in [nk])$ is composed of (\texttt{index}, \texttt{value})
\Ensure $\mathbf{G^*}$: Aggregated gradient and $d$ length vector
\Procedure{Aggregation}{$\mathbf{G}$}
    \State \textbf{/* linear algorithm */}
    \State Initialize gradients $\mathbf{G^*}$
    \For{$i= 1,...,n$}
        \For{$j= 1,...,k$}
            \State $\mathbf{G^*}[\mathbf{G}[k*(i-1) + j].\texttt{index}] \mathrel{+}= \mathbf{G}[k*(i-1) + j].\texttt{value}$
        \EndFor
    \EndFor
    \State /* Averaging and Perturbing with linear access */
    \For{$i= 1,...,d$}
        \State $\mathbf{G^*}[i] \mathrel{/}= n$
    \EndFor
    \For{$i= 1,...,d$}
        \State $z \leftarrow$ Random noise (e.g., Gaussian distribution)
        \State $\mathbf{G^*}[i] \mathrel{+}= z$
    \EndFor
    \State \textbf{return} $\mathbf{G^*}$
\EndProcedure

\end{algorithmic}
\end{algorithm}

\section{Proofs of obliviousness}
\label{appendix:proofs_obliviousness}

Proof of Proposition \ref{prop:linear}.
\begin{proof}
Let the access pattern of \textit{linear} algorithm for dense gradients be $\mathbf{Accesses^{dense}}$; then, the pattern is represented as follows:
\begin{equation}
\nonumber
\small
\begin{split}
  \mathbf{Accesses^{dense}} &= \\
  [(\mathbf{G}[1], \mathtt{read}&,*), (\mathbf{G^*}[1], \mathtt{read},*), (\mathbf{G^*}[1], \mathtt{write},*),..., \\
  (\mathbf{G}[nd], \mathtt{read}&,*), (\mathbf{G^*}[d], \mathtt{read},*), (\mathbf{G^*}[d], \mathtt{write},*)]
 \end{split}
\end{equation}
\noindent
This means reading the sent gradients \textbf{$\mathbf{G}[id+j]$}, reading the corresponding aggregated gradients \textbf{$\mathbf{G^*}[j]$}, adding them together, and then writing them to aggregated gradient \textbf{$\mathbf{G^*}[j]$} again, for any $i\in [n]$ and $j\in [d]$.
For any two input data $I, I'$ of equal length, for any security parameter $\lambda$, $\mathbf{Accesses^{dense}}$ is identical and the statistical distance $\delta = 0$.
Finally, \textit{linear} algorithm is 0-statistical oblivious.
\end{proof}

Proof of Proposition \ref{prop:baseline}.
\begin{proof}
Let the access pattern observed through algorithm \ref{alg:baseline} be $\mathbf{Accesses^{baseline}}$, and it is as follows:
\begin{equation}
\nonumber
\small
\begin{split}
  \mathbf{Accesses^{baseline}} &= \\
  [(\mathbf{G}[1], \mathtt{read}&,*), (\mathbf{G_c^*}[1], \mathtt{write},*),...,
  (\mathbf{G_c^*}[d/c], \mathtt{write},*),..., \\
  (\mathbf{G}[k], \mathtt{read}&,*), (\mathbf{G_c^*}[1], \mathtt{write},*),...,
  (\mathbf{G_c^*}[d/c], \mathtt{write},*)]
 \end{split}
\end{equation}
\noindent
where $c$ is the number of gradients included in one cacheline and $\mathbf{G_c^*}$ is an array with $d/c$ cells where $\mathbf{G^*}$ is divided at the granularity of a cacheline.
Since $\mathbf{Accesses^{baseline}}$ is the identical sequence for any inputs of the same length, algorithm \ref{alg:baseline} is 0-statistical oblivious.
\end{proof}

\section{Relation with Differential Privacy}
\label{appendix:dp_olive}

\begin{table*}[]
\normalsize
\caption{Comparison with different schemes of DP-FL in terms of trust model and utility.}
\begin{center}
\renewcommand{\arraystretch}{1.0}
\begin{tabular}{lcc}
\hline
& Trust model & Utility \\ \hline \hline
CDP-FL \cite{geyer2017differentially, DBLP:conf/iclr/McMahanRT018, andrew2021nips, wei2020federated} & Trusted server & Good \\ \hline
LDP-FL \cite{zhao2020local, ldp2020, liu2020fedsel, ijcai2021-217} & Untrusted server & Limited \\ \hline
Shuffle DP-FL \cite{liu2020flame, girgis2021shuffled} & Untrusted server + Shuffler &  Shuffle DP-FL $\leq$ CDP-FL\\ \hline 
\textbf{\method (Ours)} & \textbf{Untrusted Server with TEE} &  \textbf{\method = CDP-FL} \\ \hline
\end{tabular}
\label{table:comparison}
\end{center}
\vspace{-10px}
\end{table*}

\subsection{Overview}

Differentially private FL (DP-FL) \cite{geyer2017differentially, DBLP:conf/iclr/McMahanRT018} has garnered significant attention due to its capacity to alleviate privacy concerns by ensuring Differential Privacy (DP) \cite{dwork2006differential}.
Researchers have explored various DP-FL techniques to strike a good balance between trust model and utility, as shown in Table \ref{table:comparison}.

In central DP Federated Learning (CDP-FL)  \cite{geyer2017differentially, DBLP:conf/iclr/McMahanRT018, andrew2021nips, wei2020federated}, a \textit{trusted} server collects the raw participants' data and takes the responsibility to privatize the global model.
(Client-level) CDP-FL guarantees that it is probabilistically indistinguishable whether a client is participating in the training or not.
It is defined as follows:
\begin{definition}[(client-level) $(\epsilon, \delta)$-differential privacy \cite{DBLP:conf/iclr/McMahanRT018}]
A randomized mechanism $\mathcal{M}:\mathcal{D}\rightarrow\mathcal{Z}$ satisfies $(\epsilon, \delta)$-DP if, for any two neighboring  datasets $D, D' \in \mathcal{D}$ such that $D'$ differs from $D$ in at most one client's record set and any subset of outputs $Z \subseteq \mathcal{Z}$, it holds that
\begin{equation}
\nonumber
  \Pr[\mathcal{M}(D)\in Z] \leq \exp(\epsilon) \Pr[\mathcal{M}(D')\in Z] + \delta.
\end{equation}
where $\mathcal{Z}$ corresponds to the final trained model and $\mathcal{M}(D)$ corresponds to the learning algorithm with perturbation (e.g., DP-SGD) that uses input client $D$'s training data to learn.
\end{definition}
\noindent
In general, CDP-FL provides a good trade-off between privacy and utility  (e.g., model accuracy) of differentially private models even at practical model scales \cite{DBLP:conf/iclr/McMahanRT018, andrew2021nips}.
However, CDP-FL requires the server to access raw gradients, which leads to major privacy concerns on the server as the original data can be reconstructed even from the raw gradients \cite{zhu2020deep, zhao2020idlg}.

In LDP (Local DP)-FL \cite{zhao2020local, ldp2020, liu2020fedsel, ijcai2021-217}, the clients perturb the gradients before sharing with an \textit{untrusted} server, guaranteeing formal privacy against both malicious third parties and the untrusted server.
LDP-FL does not require a trustful server unlike CDP-FL.
However, LDP-FL suffers from lousy privacy-utility trade-off, especially when the number of users is not sufficient (i.e., the signal is drowned in noise) or the number of the model parameters is large (i.e., more noises are needed for achieving the same level of DP).
Unfortunately, it is limited to models with an extremely small number of parameters or companies with a huge user base (e.g., 10 million).

To overcome the weakness of the utility of LDP by privacy amplification, a method using the shuffler model \cite{balle2019privacy, erlingsson2019amplification}, has been proposed \cite{liu2020flame}, i.e., Shuffle DP-FL.
This method introduces a trusted shuffler instead of trusting the server and achieves some level of utility.
However, clearly, it cannot outperform CDP in utility because we can simulate the shuffler mechanism on a trusted server.
The privacy amplification of the shuffler also has weaknesses, such as the need for a large number of participants and small parameter size due to the underlying LDP limitation.
This is clearly highlighted in Table 12 of \cite{erlingsson2020encode} \footnote{We can reproduce the similar result with our code \url{https://github.com/FumiyukiKato/FL-TEE/blob/master/src/eval-ldp-sgd.py}.}.
Hence, there is still a utility gap between CDP-FL and the state-of-the-art Shuffle DP-FL.

To fill this gap, our proposed \method can be used as illustrated in Figure \ref{fig:overview}.
\method employs TEE to ensure secure model aggregation on an untrusted server so that only differentially private models are observable by the untrusted server or any third parties.
The utility of \method is exactly the same as the conventional CDP-FL as the computation inside TEE can be implemented for arbitrary algorithms.
Note that there is differences from the pairwise-masking secure aggregation, which has limitations on the DP mechanism.
For example, it requires to discretize the parameters and noises and to add noises in a distributed manner \cite{kairouz2021distributed, chen2022fundamental}.

\subsection{DP-FL in \method: top-$k$ sparsified client-level CDP-FL on TEE}

\begin{algorithm}[t]
\small
\caption{DP-FL in \method}
\label{alg:olive_dp_fl}
\begin{algorithmic}[1]
\renewcommand{\algorithmicrequire}{\textbf{Input:}}
\renewcommand{\algorithmicensure}{\textbf{Output:}}

\Require $N$: \# participants, $q$: sampling rate of participants, $\eta_c$, $\eta_s$: learning rate, $\sigma$: noise parameter, $T$: number of rounds
\State $\text{KeyStore} \leftarrow \textit{Remote Attestation}$ with all user $i$ \Comment{key-value store in enclave that stores $sk_{i}$: user $i$'s shared key from RA in provisioning}
\Procedure{Train}{$q$, $\eta_c$, $\eta_s$, $\sigma$, $T$}
    \State Initialize model $\theta^0$, clipping bound $C$
    \For{each round $t=0, 1, \ldots, T$}
        \State $\mathcal{Q}^t \leftarrow$ (sample users with probability $q$) \Comment{securely in enclave}
        \For{each user $i \in \mathcal{Q}^t$ \textbf{in parallel}}
            \State $\text{Enc}(\Delta^{t}_i) \leftarrow$ \textsc{EncClient}$(i, \theta^t, \eta_c, C)$ \Comment{with AE mode}
            \State $\text{LoadToEnclave}(\text{Enc}(\Delta^{t}_i))$
            \State check if user $i$ is in $\mathcal{Q}^t$
            \State $sk_{i} \leftarrow \text{KeyStore}[i]$ \Comment{retrieve user $i$'s shared key}
            \State $\Delta^{t}_i \leftarrow \text{Decrypt}(Enc(\Delta^{t}_i), sk_{i})$ \Comment{with verification}
        \EndFor
        \State /* \textbf{Obliviously performed, such as Alg. \ref{alg:baseline} or \ref{alg:advanced}} */
        \Statex \;\;\;\;\;\;\;\;\;\, \red{$\mathbf{\tilde{\Delta}^t = \frac{1}{qN} \left( \sum_{i \in \mathcal{Q}^t} \Delta^{t}_i + \mathcal{N}(0, \sigma^2 C^2\mathbf{I}_d) \right)}$} \Comment{oblivious aggregation}\label{alg1:dp}
        \State $\text{LoadFromEnclave}(\tilde{\Delta}^t)$
        \State $\theta^{t+1} \leftarrow \theta^t + \eta_s \bar{\Delta}^t$
    \EndFor
\EndProcedure

\Procedure{EncClient}{$i$, $\theta^t$, $\eta$, $C$}
    \State $\theta \leftarrow \theta^t$
    \State $\mathcal{G} \leftarrow $ (user $i$'s local data split into batches)
    \For{batch $g \in \mathcal{G}$}
        \State $\theta \leftarrow \theta - \eta \nabla \ell(\theta;g)$
    \EndFor
    \State $\Delta \leftarrow \theta - \theta^t$
    \State \red{$\Delta$ $\leftarrow$ TopkSparse$(\Delta)$} \Comment{top-$k$ sparsification} \label{alg1:sparsification}
    \State \red{$\Delta' \leftarrow \Delta \cdot \min{\left(1, \frac{C}{||\Delta||_2}\right)}$} \Comment{$\ell2$ clipping}
    \State $\text{Enc}(\Delta') \leftarrow \text{Encrypt}(\Delta', sk_{i})$ \Comment{with shared key $sk_i$ from RA}
    \State \textbf{return} $\text{Enc}(\Delta')$
\EndProcedure

\end{algorithmic}
\end{algorithm}

Algorithm \ref{alg:olive_dp_fl} depicts the algorithm for the combination of CDP-FL and \method.
On the client side, after computing the parameter delta, top-$k$ sparsification is executed (line 21) followed by clipping (line 22), encryption, and data transmission to the TEE on the server side.
This approach just incorporates client-side top-$k$ sparsification into \texttt{DP-FedAVG} \cite{DBLP:conf/iclr/McMahanRT018}.
The hyperparameter $q$ is needed for privacy amplification through client-level sampling.
$\sigma$ is the noise multiplier that determines the variance of the Gaussian noise to satisfy DP (line 12) (which is noise's standard deviation divided by the clipping scale and commonly used in DP-SGD \cite{abadi2016deep} framework).
And $C$ is clipping parameter to bound $\ell2$-sensitivity.
A similar procedure has been proposed in  \cite{cheng2022differentially}, although TEE part is not included.

The privacy analysis of Algorithm \ref{alg:olive_dp_fl} is discussed in the rest of this section.
Recent works \cite{hu2022federated, cheng2022differentially} have investigated the combination of client-level CDP-FL and sparsification. 
The privacy analysis is performed by combining existing Renyi differential privacy (RDP) analysis techniques (or moments accountant \cite{abadi2016deep} which is equivalent to RDP analysis) as well as common CDP-FL \cite{DBLP:conf/iclr/McMahanRT018}.

However, one salient aspect is the treatment of sparsification (which is described in Section \ref{sec:background_federated_learning}).
The crucial point is whether the indices of the parameters selected by sparsification are common or distinct among all clients.
If all clients have common sparsified indices ($k$ out of $d$ indices), the Gaussian mechanism required for DP only needs $k$-dimensional noise, as only $k$ parameters of the global model require updating in a single round of aggregation.
This results in a direct reduction of noise by a factor of $O(k/d)$. 
To this end, \cite{hu2022federated} proposes a method for obtaining the common top-$k$ indices among many clients for sparsification.
However, as noted in \cite{ergun2021sparsified}, in practical setting, there is actually little overlap in the top-$k$ sparsified indices for each client, especially in the non-i.i.d. setting, which is general in FL. 
Hence, a common top-$k$ index appears to be impractical.

On the other hand, we consider the scenario where different sparsified indices are chosen for different clients.
This represents a standard setup in the absence of DP. 
In contrast to the previous case, where all clients shared a common set of sparsified indices, there is no reduction in Gaussian mechanism noise on the order of $O(k/d)$.
This is due to the fact that while each client transmits sparsified parameters of dimension $k$.
However, any of the $d$ dimensions of the global model may be updated with the transmitted sparsifed parameters.
Hence, noise need to be added to all dimensions to ensure DP rather than only to the $k$ dimensions.
This remains true regardless of whether the noise is added on the client or server side, or what type of sparsification is employed as far as aiming to guarantee a global model DP.
This may have been overlooked in previous work that employed sparsification \cite{liu2020flame}.

Nevertheless, despite the above discussion, such client-specific sparsification can improve the trade-off between privacy and utility to a certain extent.
This is because sparsification reduces the absolute value of the $\ell2$-norm of the transmitted parameters.
As we formally describe later, the $\ell2$-norm of the shared parameters from each client must be bounded by the clipping parameter $C$ to add Gaussian noise for DP.
When clipping is performed on the original dense parameters, all parameters contribute to the $\ell2$-norm.
In the case of sparsification, however, only $k$ parameters contribute to the $\ell2$-norm.
Intuitively, the less important $d - k$ parameters are discarded and the space in the $\ell2$-norm is allocated to the more important $k$ parameters, thus increasing their utility.
Consequently, this also means that the clipping size $C$ can be set lower in the sparsified case, which can lead to lower noise variance.
This observation is the basis for the sparsification proposed in \cite{cheng2022differentially}.
To be more precise, \cite{cheng2022differentially} sparsifies according to their own utility criteria, rather than selecting the top-$k$ parameters, but the characteristics of the privacy-utility trade-offs are the same.
In general, it can be concluded that the amount of noise required for CDP is the same in the case of sparsification as in the absence of sparsification.

\textbf{Formal privacy statement.}
We now formally state the DP satisfied by Algorithm \ref{alg:olive_dp_fl} 
for completeness.
The following definitions and lemmas are the same as the ones stated in existing studies such as \cite{hu2022federated, cheng2022differentially}.

\begin{definition}[Sensitivity]
The sensitivity of a function $f$ for any two neighboring inputs $D, D' \in \mathcal{D}$ is:
\begin{equation}
\nonumber
    \Delta_{f} = \sup_{D, D' \in \mathcal{D}} \|f(D)-f(D')\|.
\end{equation}
where $||\cdot||$ is a norm function defined in $f$'s output domain.
\end{definition}

\noindent
We consider $\ell2$-norm ($||\cdot||_{2}$) as $\ell2$-sensitivity for following analysis with Gaussian noise.
We use R\'{e}nyi DP (RDP) \cite{mironov2017renyi} because of the tightness of the privacy analysis and the composition.

\begin{definition}[$(\alpha, \rho)$-RDP \cite{mironov2017renyi}]
Given a real number $\alpha \in (1, \infty)$ and privacy parameter $\rho \ge 0$, a randomized mechanism $\mathcal{M}$ satisfies $(\alpha, \rho)$-RDP if for any two neighboring datasets $D, D' \in \mathcal{D}$ such that $D'$ differs from $D$ in at most one client's record set, we have that $D_{\alpha}(\mathcal{M}(D)||\mathcal{M}(D')) \le \rho$ where $D_{\alpha}(\mathcal{M}(D)||\mathcal{M}(D'))$ is the R\'{e}nyi
divergence between $\mathcal{M}(D)$ and $\mathcal{M}(D')$ and is given by
\begin{equation}
\nonumber
  D_{\alpha}(\mathcal{M}(D)||\mathcal{M}(D')) := \cfrac{1}{\alpha -1}\log{\mathbb{E}\left[\left(\cfrac{\mathcal{M}(D)}{\mathcal{M}(D')}\right)^{\alpha}\right]} \le \rho,
\end{equation}
where the expectation is taken over the output of $\mathcal{M}(D)$.
\end{definition}

\begin{lemma}[RDP composition \cite{mironov2017renyi}]
\label{lemma:rdp_composition}
If $\mathcal{M}_1$ satisfies $(\alpha, \rho_1)$-RDP and $\mathcal{M}_{2}$ satisfies $(\alpha, \rho_2)$, then their
composition $\mathcal{M}_{1} \circ \mathcal{M}_{2}$ satisfies $(\alpha, \rho_1 + \rho_2)$-RDP.
\end{lemma}

\begin{lemma}[RDP to DP conversion \cite{wang2019subsampled}]
\label{lemma:rdp_conversion}
If $\mathcal{M}$ satisfies $(\alpha, \rho)$-RDP, then it also satisfies $(\rho +
\frac{\log{(1/\delta)}}{\alpha - 1}, \delta)$-DP for any $0 < \delta < 1$.
\end{lemma}

\begin{lemma}[RDP Gaussian mechanism\cite{mironov2017renyi}]
\label{lemma:rdp_gaussian}
If $f: D \rightarrow \mathbb{R}^d$ has $\ell2$-sensitivity $\Delta_{f}$, then the Gaussian mechanism $G_{f}(\cdot) := f(\cdot) + \mathcal{N}(0, \sigma^2\Delta_{f}^2\mathbf{I}_{d})$ is $(\alpha, \alpha / 2\sigma^2)$-RDP for any $\alpha > 1$.
\end{lemma}

\begin{lemma}[RDP for subsampled Gaussian mechanism \cite{wang2019subsampled}]
\label{lemma:rdp_subsampled}
Let $\alpha \in \mathbb{N} $ with $\alpha \ge 2$ and $0 < q < 1$ be a subsampling ratio of subsampling operation $Samp_q$.
Let $G'_{f}(\cdot) := G_{f} \circ Samp_q (\cdot)$ be a subsampled Gaussian mechanism.
Then, $G'_{f}$ is $(\alpha, \rho'(\alpha, \sigma))$-RDP where
\begin{equation}
\begin{aligned}
\nonumber
\rho'(\alpha, \sigma) \le \cfrac{1}{\alpha - 1}\log\biggl(1 + 2q^2 \binom{\alpha}{2} &\min{\{2(e^{1/\sigma^2} - 1), e^{1/\sigma^2}\}} \\
&+ \sum_{j=3}^{\alpha}{2q^j \binom{\alpha}{j} e^{j(j - 1)/2\sigma^2}} \biggl).
\end{aligned}
\end{equation}
\end{lemma}

Finally, we state the formal differential privacy guarantees provided by Alg. \ref{alg:olive_dp_fl}.
\begin{theorem}[]
For any $\epsilon < 2 \log{(1/\delta)}$ and $0 < \delta < 1$, Alg. \ref{alg:olive_dp_fl} satisfies $(\epsilon, \delta)$-DP after $T$ communication rounds if
\begin{equation}
\nonumber
\sigma^2 \ge \cfrac{7q^2 T (\epsilon + 2 \log{(1/\delta)})}{\epsilon^2}.
\end{equation}
\end{theorem}

\begin{proof}
    In each round $t$ of $T$ in \textsc{Train} (line 2 of Alg. \ref{alg:olive_dp_fl}), let $f$ be a summation of delta parameters ($\Delta_{i}^{t}$, line 11), the $\ell2$-sensitivity of $f$ is $C$ due to clipping operation (line 22).
    As explained in detail above, this is independent of the sparsified dimension $k$.
    Hence, adding the Gaussian noise $\mathcal{N}(0, \sigma^2 C^2\mathbf{I}_{d})$, i.e., $G_{f}$, satisfies $(\alpha, \alpha / 2\sigma^2)$-RDP for any $\alpha > 1$ by Lemma \ref{lemma:rdp_gaussian}.
    Further, in the round, the participants are subsampled with probability $q$ (line 5).
    Then, following Lemma 3 of \cite{wang2019efficient}, if $\sigma^2 \ge 0.7$ and $\alpha \le 1 + (2/3) C^2 \sigma^2 \log{\frac{1}{q\alpha (1+\sigma^2)}}$, by Lemma \ref{lemma:rdp_subsampled}, subsampled Gaussian mechanism $G'_{f}(\cdot)$ satisfies $(\alpha, \frac{3.5 q^2 \alpha}{\sigma^2})$-RDP.
    Over $T$ rounds, by Lemma \ref{lemma:rdp_composition}, it satisfies $(\alpha, T\frac{3.5 q^2 \alpha}{\sigma^2})$-RDP.
    Lastly, we convert RDP guarantee to $(\epsilon, \delta)$-DP by Lemma \ref{lemma:rdp_conversion}.
    $\epsilon$ needs to hold $T\frac{3.5 q^2 \alpha}{\sigma^2} + \frac{\log{1/\delta}}{\alpha - 1} \le \epsilon$.
    Choose $\alpha = 1 + 2 \log{(1/\delta)}$, we obtain the final result.
\end{proof}

\subsection{Attack evaluation}

\begin{figure*}[t]
    \centering
    \includegraphics[width=0.95\hsize]{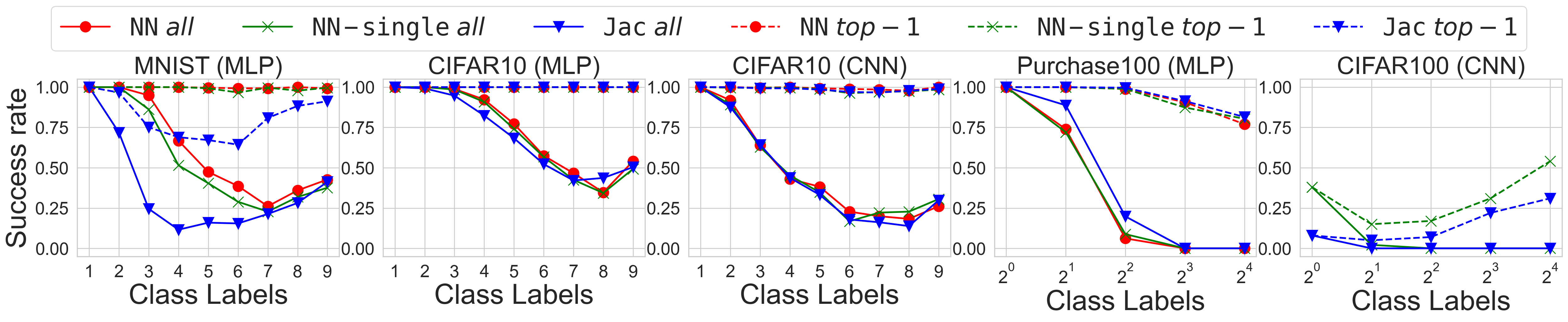}
    \vspace{-3px}
    \caption{Attack results on datasets with a fixed number of labels with DP ($\sigma=1.12$).}
    \label{fig:attack_fixed_label_with_dp}
\end{figure*}

\begin{figure*}[t]
    \centering
    \includegraphics[width=0.95\hsize]{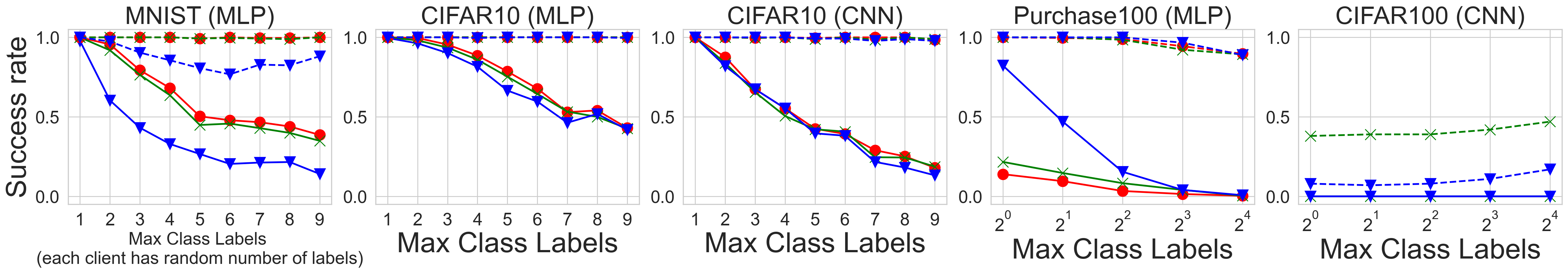}
    \vspace{-3px}
    \caption{Attack results on datasets with a random number of labels (more difficult setting) with DP ($\sigma=1.12$).}
    \label{fig:attack_random_label_with_dp}
\end{figure*}

\begin{figure}[t]
    \centering
    \includegraphics[width=0.9\hsize]{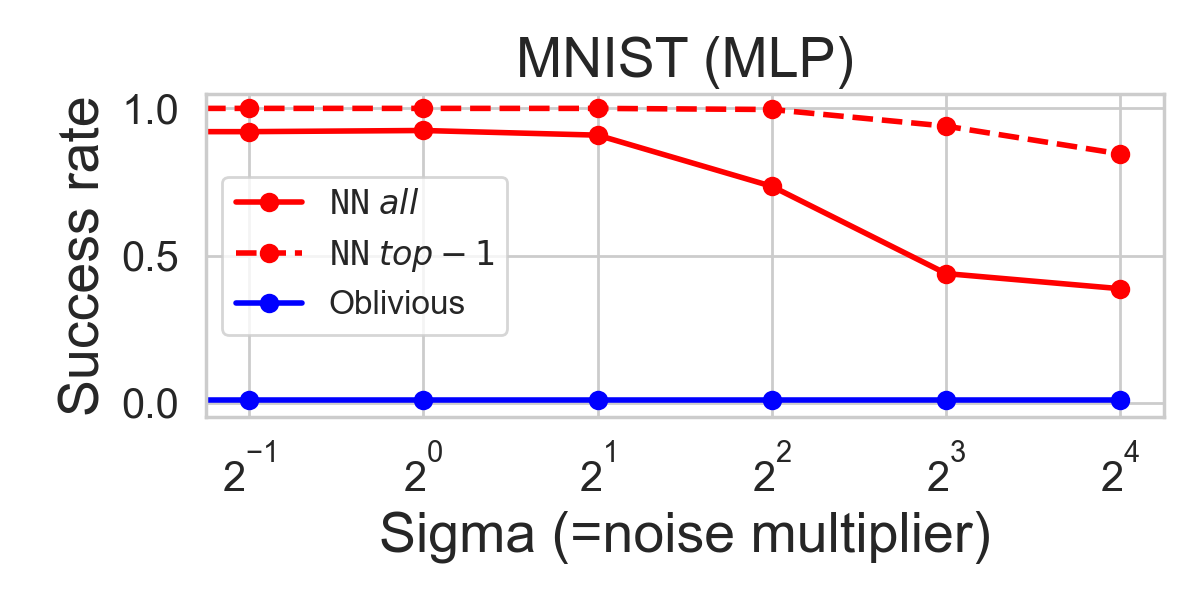}
    \caption{Attack performance with variable noise multiplier $\sigma$. At realistic noise scales, the attack performance remains high.}
    \label{fig:attack_per_noise}
\end{figure}

\begin{figure}[t]
    \centering
    \includegraphics[width=0.9\hsize]{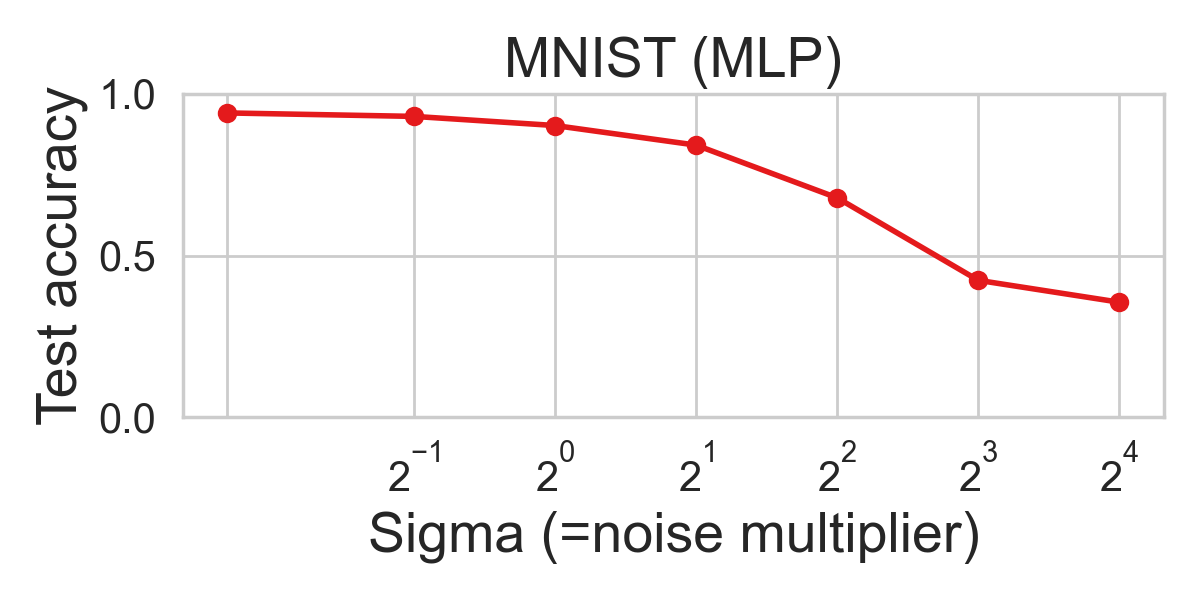}
    \caption{Effective noise scales in defensing do not provide sufficient utility.}
    \label{fig:utlity_per_noise}
\end{figure}

\begin{figure}[t]
  \begin{minipage}[b]{0.48\linewidth}
    \centering
    \includegraphics[width=0.9\hsize]{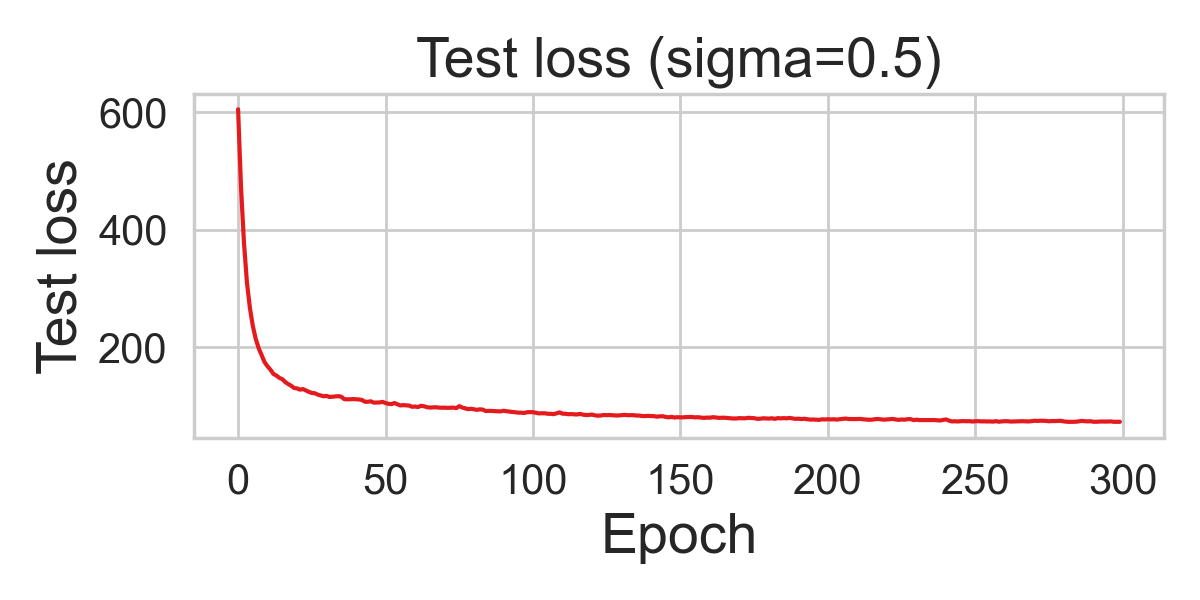}
  \end{minipage}
  \begin{minipage}[b]{0.48\linewidth}
    \centering
    \includegraphics[width=0.9\hsize]{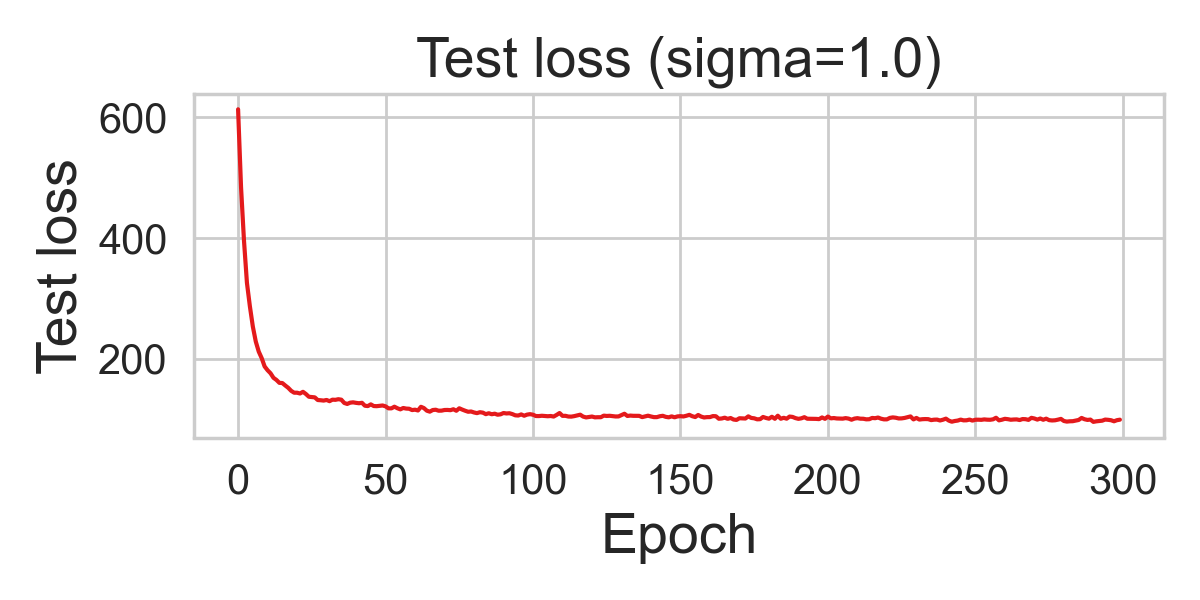}
  \end{minipage}
  \begin{minipage}[b]{0.48\linewidth}
    \centering
    \includegraphics[width=0.9\hsize]{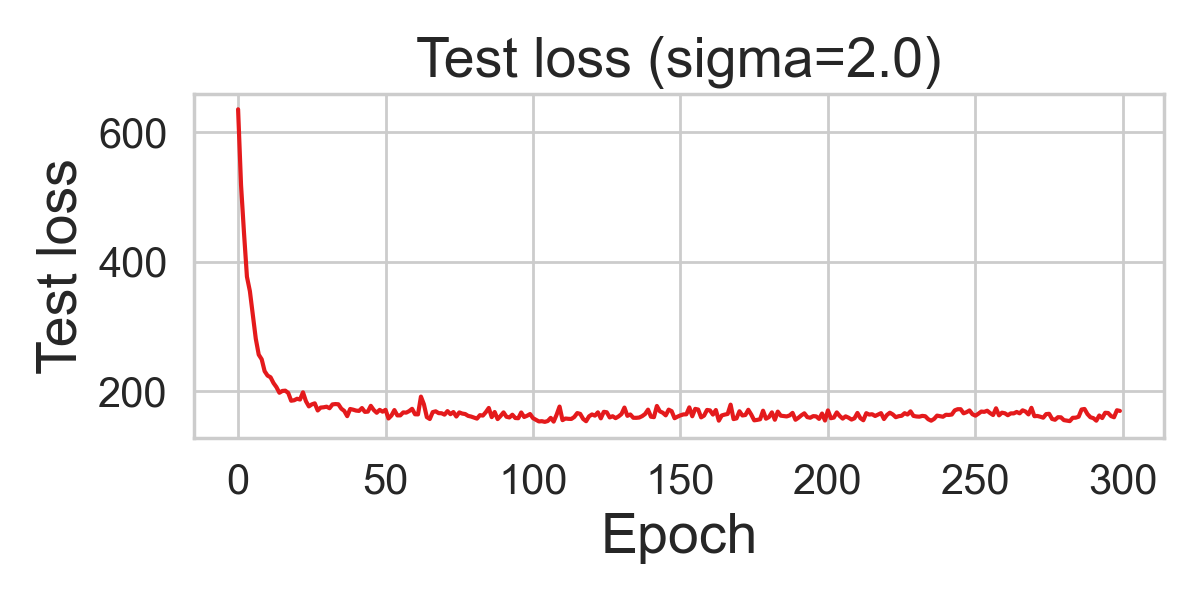}
  \end{minipage}
  \begin{minipage}[b]{0.48\linewidth}
    \centering
    \includegraphics[width=0.9\hsize]{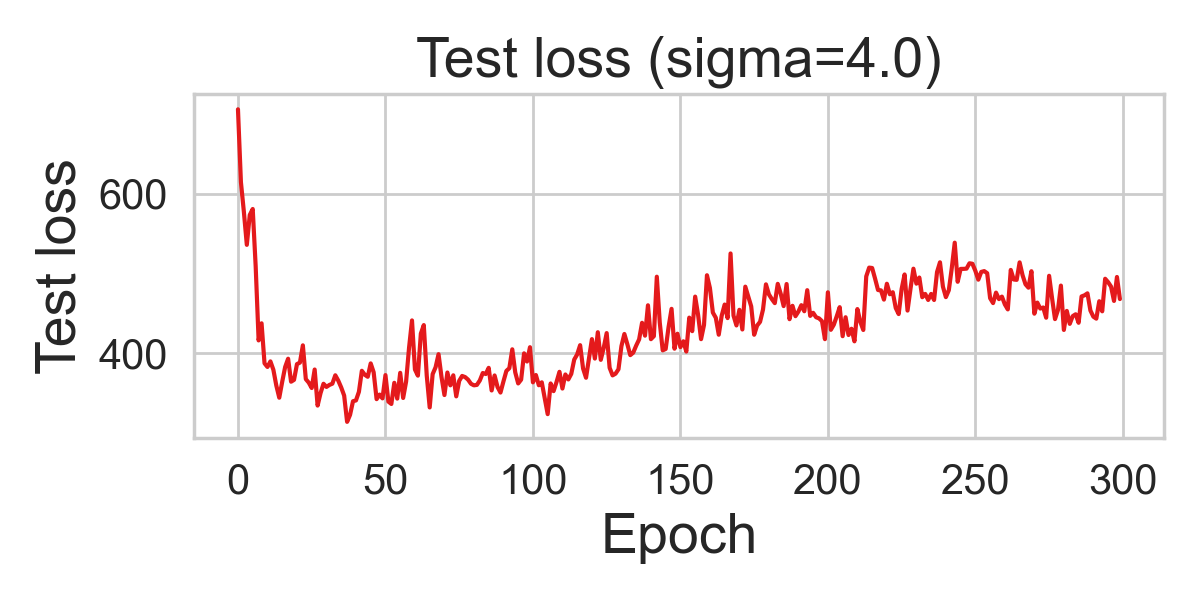}
  \end{minipage}
    \begin{minipage}[b]{0.48\linewidth}
    \centering
    \includegraphics[width=0.9\hsize]{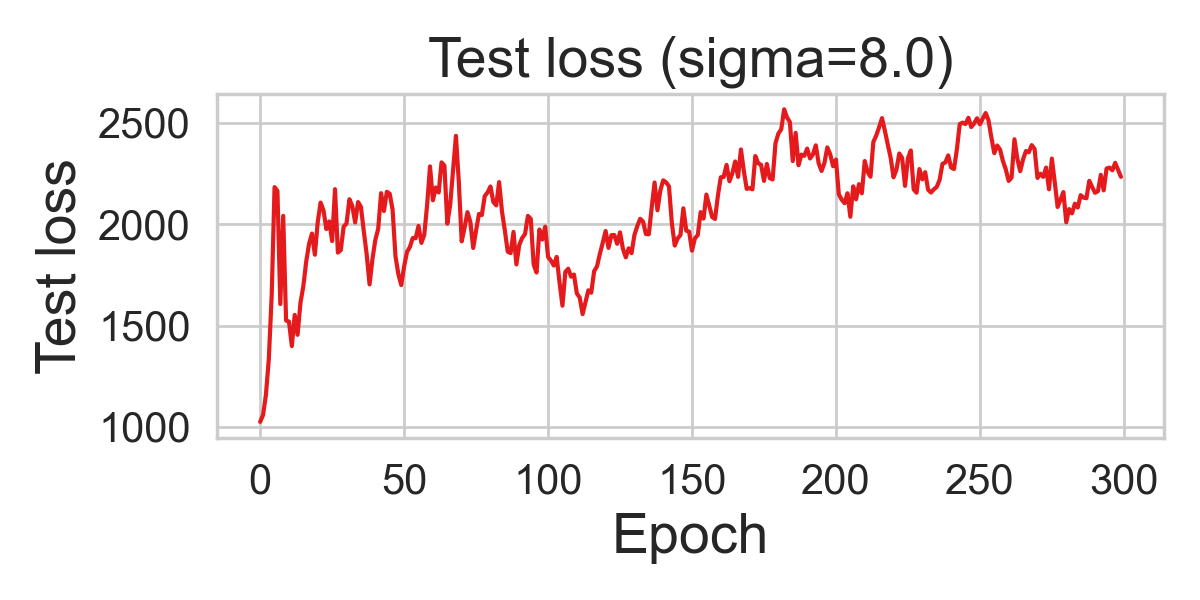}
  \end{minipage}
  \begin{minipage}[b]{0.48\linewidth}
    \centering
    \includegraphics[width=0.9\hsize]{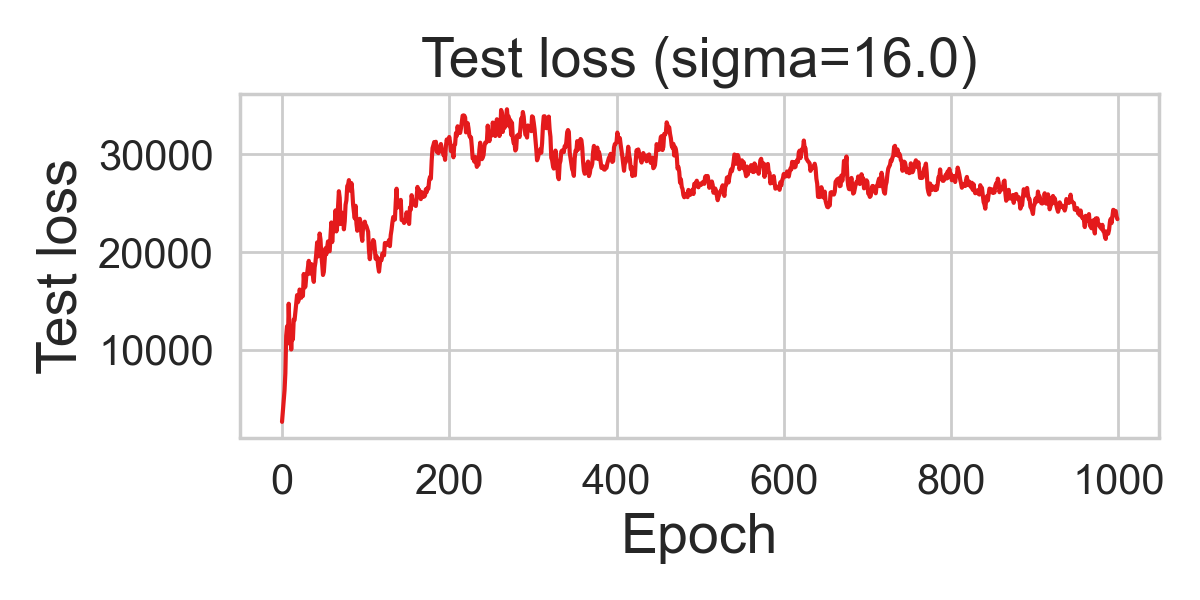}
  \end{minipage}
  \caption{Test losses for each noise multiplier $\sigma$.}
  \label{fig:test_losses_dp}
\end{figure}

\begin{table}[h]
\renewcommand{\arraystretch}{1.2}
\caption{Architectures of the neural networks used as global models in all FL experiments in Sections \ref{sec:attack_experiment} and \ref{sec:experiment}. Readers can find the details of \textit{ResNet-18} at \url{https://github.com/weiaicunzai/pytorch-cifar100/blob/master/models/resnet.py}.}
\begin{center}
\begin{tabular}{lll}
\hline
Name            & Layers                                       & Details                                                                                                                                   \\ \hline
MNIST MLP       & \multicolumn{1}{c}{2 Fully Connected Layers} & \begin{tabular}[c]{@{}l@{}}Input: 28 * 28\\ Hidden: 64\\ Dropout: 0.5\\ Activation: ReLU\\ Output: 10\end{tabular}                     \\ \hline
CIFAR10 MLP     & 2 Fully Connected Layers                     & \begin{tabular}[c]{@{}l@{}}Input: 3 * 32 * 32\\ Hidden: 64\\ Dropout: 0.5\\ Activation: ReLU\\ Output: 10\end{tabular}                    \\ \hline
CIFAR10 CNN     & Convolutional 1                              & \begin{tabular}[c]{@{}l@{}}Input: 3 * 32 * 32\\ Activation: ReLU\\ Maxpooling:\\     kernel size: 2\\     stride: 2\end{tabular}          \\ \cline{2-3} 
                & Convolutional 2                              & \begin{tabular}[c]{@{}l@{}}Input: 6 * 14 * 14\\ Activation: ReLU\\ Maxpooling:\\     kernel size: 2\\     stride: 2\end{tabular}          \\ \cline{2-3} 
                & 3 Fully Connected Layers                     & \begin{tabular}[c]{@{}l@{}}Input: 16 * 5 * 5\\ Hidden1: 120\\ Activation: ReLU\\  Hidden2: 84\\  Activation: ReLU\\ Output: 10\end{tabular} \\ \hline
Purchase100 MLP & 2 Fully Connected Layers                     & \begin{tabular}[c]{@{}l@{}}Input: 600\\ Hidden: 64\\ Dropout: 0.5\\ Activation: ReLU\\ Output: 100\end{tabular}                           \\ \hline
CIFAR100 CNN    & \textit{ResNet-18} &                                                                                                                                             \\ \hline
\end{tabular}
\end{center}
\label{table:nnarc1}
\end{table}

\begin{table}[h]
\renewcommand{\arraystretch}{1.2}
\normalsize
\caption{Architectures of the neural networks used in Section \ref{sec:attack_on_index}. $d$ is the number of parameters of the global model trained in FL and $|L|$ is the number of labels of inference target.}
\begin{center}
\begin{tabular}{lll}
\hline
Name      & Layers                                       & Details                                                                                                         \\ \hline
\textsc{NN}        & \multicolumn{1}{c}{2 Fully Connected Layers} & \begin{tabular}[c]{@{}l@{}}Input: $d$\\ Hidden: 1000\\ Dropout: 0.5\\ Activation: ReLU\\  Output: $|L|$\end{tabular} \\ \hline
\textsc{NN-Single} & 2 Fully Connected Layers                     & \begin{tabular}[c]{@{}l@{}}Input: $d$\\ Hidden: 2000\\ Dropout: 0.5\\  Activation: ReLU\\ Output: $|L|$\end{tabular} \\ \hline
\end{tabular}
\end{center}
\label{table:nn_arc2}
\end{table}

Here, we demonstrate that our proposed attack remains viable even in the presence of differential privacy.
Firstly, we elucidate the reasons for our attack circumventing DP in Algorithm \ref{alg:olive_dp_fl}.
During each round of FL, the attacker is able to observe the index prior to perturbation (line 12 of Algorithm \ref{alg:olive_dp_fl}), thereby exposing the raw index information.
It should be noted that CDP-FL also employs distributed Gaussian noise on the client side.
However, it is performed after sparsification \cite{cheng2022differentially}, which implies that the raw index information is still visible.
Nevertheless, the randomization of the parameters of the global model by DP may reduce the accuracy of the attack.
This approach should be considered carefully, as the model may not be well trained itself.
In the next experiment, we see how much protection and how much model utility is sacrificed by the DP-based approach.

The experimental setting is the same as Section \ref{sec:attack_experiment}.
When the noise multiplier $\sigma$ is set to 1.12, the attack is essentially unaffected.
Figures \ref{fig:attack_fixed_label_with_dp} and \ref{fig:attack_random_label_with_dp} are DP versions of Figures \ref{fig:attack_fixed_label} and \ref{fig:attack_random_label}.
Although the success rate of attacks has decreased somewhat, there is almost no change.
Attacks are still possible.

In Figure \ref{fig:attack_per_noise}, we show the attack results on MLP of MNIST for increasing noise scale with fixed number of labels 3.
The horizontal axis indicates noise scale $\sigma$ by DP and the left-side start points indicate no noise.
Compared to the case with no noise, increasing the noise has less effect on the attack performance.
This makes sense from our attack design, where the attacker observes the raw index information of gradients even though the global model satisfies DP.
The blue line in the figure shows the attack success rate for oblivious algorithm (i.e., random inference by the attacker).
Since the number of labels is fixed at 3 and the total number of labels is 10, the success rate of this attack is $1/{}_{10} C_3 < 0.01$.
We can see that there is a limit to the defensive performance of the DP.
When we increase the noise multiplier ($\sigma$ is over $4.0$), defensive performance starts to increase, but such noise multiplier is over-strict in practical privacy degree.
This can be seen in Figure \ref{fig:utlity_per_noise}.
The figure shows the utility of the models trained with each noise multiplier, plotting the test accuracy when training MNIST with the MLP model.
The number of training rounds are fixed at 300, which is based on the observation that the training loss increased and did not converge with large multipliers (Figure \ref{fig:test_losses_dp}).
The results show that models trained with large noise multipliers are no longer useful, and that realistic noise does not protect against attacks.
These results highlight the importance of \method in CDP-FL.

\section{Running Example of Advanced}

\begin{figure}[t]
    \centering
    \includegraphics[width=1.02\hsize]{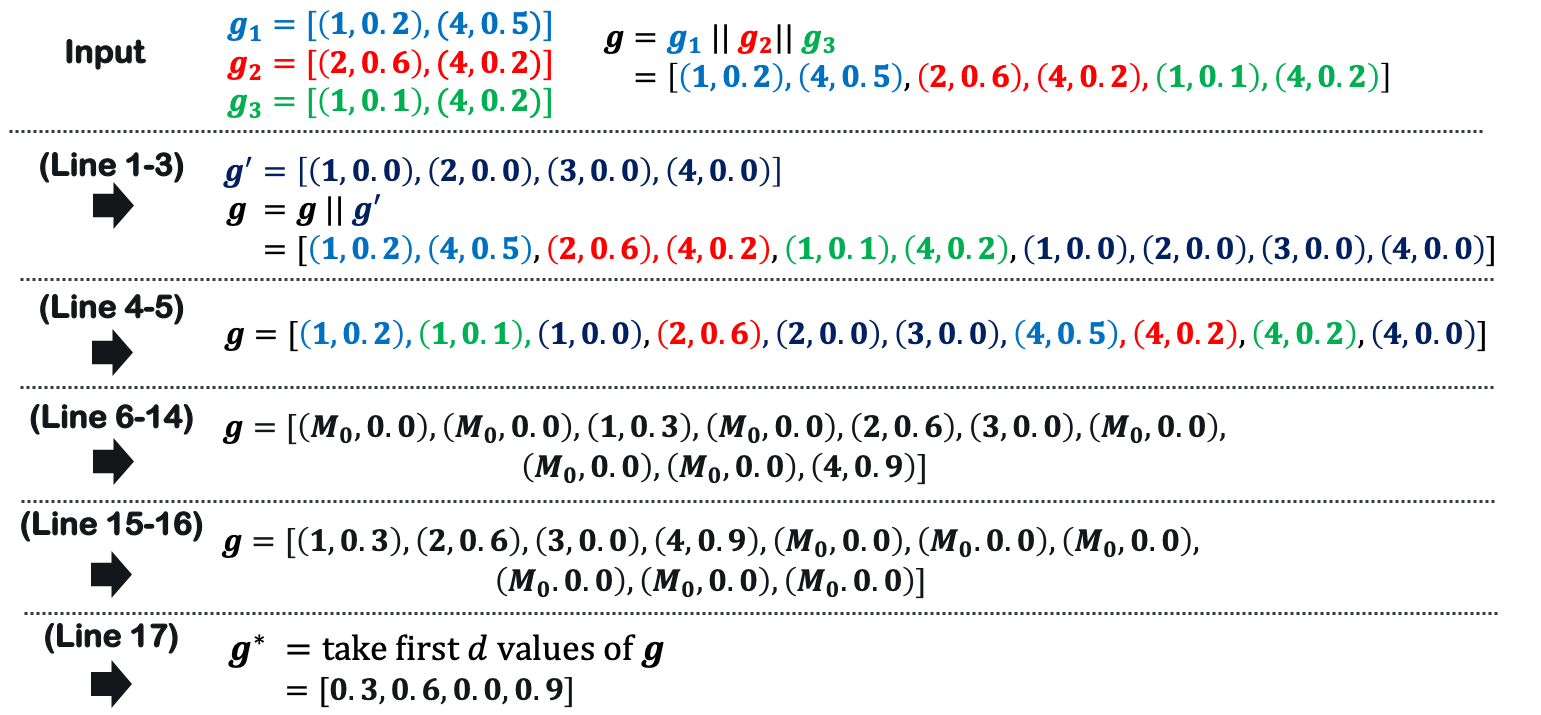}
    \caption{Running example of Advanced (Algorithm \ref{alg:advanced}) at $n=3$ (\#user), $k=2$ (\#sparsified dimension), $d=4$ (\#dimension).}
    \label{fig:running_example}
\end{figure}

We show a simple running example of Algorithm \ref{alg:advanced} at $n=3$, $k=2$ and $d=4$ in Figure \ref{fig:running_example}.

\section{Model architectures}
\label{appendix:model_arc}
Here are some details about the neural network model we used in our experiments.
The code for all models is available from our public repository.

Table \ref{table:nnarc1} shows the model used as the FL's global model throughout all experiments.
Table \ref{table:nn_arc2} describes the detailed design of the model used in the neural network-based attack in section \ref{sec:attack_experiment}.

\end{document}